\pgfplotsset{compat=1.7}
\newtheorem{theorem}{Theorem}
\newtheorem{lemma}[theorem]{Lemma}
\newtheorem{definition}[theorem]{Definition}
\newcommand{\rcga}{$r$\nobreakdash-cGA\xspace}
\newcommand{\lo}{\textsc{LeadingOnes}\xspace}
\newcommand{\cga}{cGA\xspace}
\newcommand{\ronemax}{$r$\nobreakdash-OneMax\xspace}
\newcommand{\onemax}{OneMax\xspace}
\newcommand{\gonemax}{\textit{G}\nobreakdash-OneMax\xspace}
\newcommand{\ie}{i.\,e.\xspace}
\newcommand{\whp}{w.\,h.\,p.\xspace}
\newcommand{\expect}[1]{\textup{E}[#1]}
\newcommand{\prob}[1]{\textup{P}[#1]}
\newcommand{\probbig}[1]{\textup{P}\left[#1\right]}
\newcommand{\indic}[1]{\mathds{1}\{#1\}}
\newcommand{\filt}{\mathcal{F}_t}
\newenvironment{proof}{\begin{trivlist} {\item \textbf{Proof.}}}{\hspace*{\fill}$\qed$\end{trivlist}}
\newenvironment{proofwoqed}{\begin{trivlist}\item  {\textbf{Proof.}}}{\end{trivlist}}
\newenvironment{proofof}[1]{\begin{trivlist}\item \textit{Proof of #1:}}{\hspace*{\fill}$\qed$\end{trivlist}}
\newcommand{\qed}{\quad\Box}
\title{Improved Runtime Analysis of a Multi-Valued Compact Genetic Algorithm on Two Generalized \text{OneMax} Problems}
\author{Sumit Adak \and
Carsten Witt}
\date{
	DTU Compute \\ Technical University of Denmark \\ Kgs. Lyngby, Denmark \\ \texttt{\{suad, cawi\}@dtu.dk}\\%
}
\begin{document}

\maketitle

\begin{abstract}
Recent research in the runtime analysis of estimation of distribution algorithms (EDAs) has focused on univariate EDAs for multi-valued decision variables. In particular, the runtime of the multi-valued \cga (\rcga) and UMDA on multi-valued functions has been a significant area of study. Adak and Witt (PPSN~2024) and Hamano et al.\ (ECJ~2024) independently performed a first runtime analysis of the \rcga on the $r$-valued \onemax function (\ronemax). Adak and Witt also introduced  a different $r$-valued \onemax function called \gonemax. 
However, for that function, only  empirical results were provided so
far due to the increased complexity of its runtime analysis, since \ronemax involves categorical values of two types only, while \gonemax encompasses all possible values.

In this paper, we present the first theoretical runtime analysis of the \rcga on the \gonemax function. We demonstrate that the runtime is $\bigo(nr^3\log^2 n \log r)$ with high probability. Additionally, we refine the previously established runtime analysis of the \rcga on \ronemax, improving the previous bound 
to $\bigo(nr \log n\log r)$, 
which improves the state of the art by an asymptotic factor of~$\log n$ and  
is tight for the binary case. Moreover, we for the first time include  
the case of frequency borders.

\end{abstract}

\section{Introduction}
\label{section:intro}

Estimation of distribution algorithms (EDAs) have gained significant attention in recent years as a powerful class of optimization techniques. In particular, univariate EDAs~\cite{baluja1994population,muhlenbein1996recombination,harik1999compact,Krejca2020}, which focus on multi-valued decision variables \cite{AdakPPSN2024,BENJEDIDIA2024114622}, have been a key area of study for improving algorithmic efficiency in solving complex problems. These algorithms have demonstrated promise in various domains such as combinatorial optimization, and others. Among these, the multi-valued compact genetic algorithm (\rcga)~\cite{AdakPPSN2024} and the multi-valued univariate marginal distribution algorithm ($r$-UMDA)~\cite{BENJEDIDIA2024114622} have been explored for their runtime performance on multi-valued functions, providing valuable insights into their capabilities and limitations in different problem settings.

Despite the growing interest in EDAs, the runtime analysis of these algorithms, especially when applied to functions with a large range of possible values, remains an area in need of deeper exploration. One significant contribution in this area is the work by Ben~Jedidia et al.\ \cite{BENJEDIDIA2024114622}, who conducted the first theoretical runtime analysis of the $r$-UMDA on the $r$-valued \lo function. Following this, Adak and Witt~\cite{AdakPPSN2024} and, independently, Hamano et~al.~\cite{HamanoECJ24} provided the first theoretical runtime analysis of the \rcga on the $r$-valued \onemax function (\ronemax), interpreted as categorical variables 
with only two settings. 
Building upon this, Adak and Witt also introduced the $r$-valued generalized \onemax function (\gonemax), which extends the \ronemax function by considering all possible values for the decision variables. This extension posed a more complex scenario for runtime analysis due to the larger search spaces and increased diversity of potential solutions. Therefore, no 
runtime analysis on \gonemax was given.

In this work, we address this gap by presenting the first theoretical runtime analysis of the \rcga on the \gonemax function. We show that the runtime of the \rcga on \gonemax is $\bigo(nr^3\log^2 n \log r)$ with high probability (\whp). Additionally, we improve upon the existing runtime analysis of the \rcga on \ronemax, refining the previously 
best bound by Hamano et al.~\cite{HamanoECJ24} from $\bigo(nr \log^2 (nr))$ 
to $\bigo(nr\log n \log r)$, effectively eliminating a $\log n$ factor from the original estimate and 
leading to the tight bound $\bigo(n\log n)$ in the binary case \cite{sudholt2019choice}. Our work not only extends the understanding of the \rcga's performance on more complex problems but also opens the door for future research on runtime optimization for EDAs in multi-valued settings. Moreover, the analysis on \ronemax for 
the first time 
includes the so-called borders on frequencies that are commonly used in the theoretical literature to guarantee 
finite expected optimization time; although for a parameter choice that does not lead to tight bounds. Finally, our revised analysis for the first time avoids the so-called 
well-behaved frequencies assumption, which is generally ruled out in the \rcga due to its more complex 
update of the multi-valued frequency model.

The manuscript is structured as follows. Section~\ref{section:preliminaries} defines the multi-valued \onemax functions and and the multi-valued cGA.
Section~\ref{section:framework} describes probabilistic 
properties of the algorithm.
In Section~\ref{section:runtime}, we present the primary technical results, including genetic drift analysis and the runtime analyses of the \rcga on the \ronemax and \gonemax functions. Section~\ref{section:experiment} explores the empirical runtime for different values of the parameter $K$, which represents the hypothetical population size of the \rcga. Finally, the manuscript concludes with a summary. 

\section{Preliminaries}
\label{section:preliminaries}

We analyze the $r$-valued compact genetic algorithm (\rcga) with the goal of maximizing $r$-valued functions $f\colon \{0,1,\dots,r-1\}^{n}\rightarrow \mathbb{R}$ where $f(x)$ denotes the \emph{fitness} of an individual $x \in \{0,1,\dots,r-1\}^{n}$. In this setting, the vector $x=(x_1,x_2,\dots,x_n)$ represents a candidate solution consisting of $n$ values, each from the set $\{0,1,\dots,r-1\}$. 

This work extends the analysis of the \onemax function by considering multi-valued variants, as independently considered by Adak and Witt \cite{AdakPPSN2024} and Hamano et al.~\cite{HamanoECJ24}. Following the notation from 
the first paper, these generalizations are called the \ronemax and \gonemax functions and 
accommodate the larger value set $\{0,1,\dots,r-1\}$, where $r\in \mathbb{N}_{\geq 2}$ and $n\in \mathbb{N}_{\geq 1}$. Specifically, we define \ronemax and \gonemax as follows:
\begin{align*}
r\text{-OneMax}(x) \coloneqq  \sum_{i=1}^{n} \indic {x_{i} = r-1}\
 \text{\ and\ \ }
G\text{-OneMax}(x) \coloneqq  \sum_{i=1}^{n} x_i.
\end{align*}

The single maximum in both the \ronemax and \gonemax functions is the all-$(r-1)$s string. The primary distinction between these two functions lies in how they handle the contribution of each position in the string to the overall fitness. In \ronemax, the values are categorical, which means that only the correct value for a position (especially, $r-1$) contributes to the fitness. In other words, \ronemax evaluates each position independently and assigns a fitness signal only if the value at that position matches the optimal value, $r-1$. This leads to a fitness value of $n$ for the optimal string.

In contrast, \gonemax employs a finer distance metric between values, where each position contributes a fitness signal toward the optimal value. Specifically, for each position, the contribution to fitness is proportional to how close the value is to the optimal value, $r-1$. This means that \gonemax considers all values in the set $\{0,1,\dots,r-1\}$ for each position, with fitness increasing as the value approaches $r-1$. As a result, the maximum fitness for the \gonemax function is $n(r-1)$. 
The function \gonemax was only defined in 
\cite{AdakPPSN2024} but without a concrete runtime analysis.

The compact genetic algorithm (cGA), introduced by Harik et al.\ \cite{harik1999compact}, is a popular univariate estimation of distribution algorithm that operates with a compact representation of the population. Unlike traditional evolutionary algorithms, which maintain a population of candidate solutions, \cga approximates the distribution of solutions using a probabilistic model. 
It relies on a single parameter $K\in\mathbb{R}_{>0}$, often referred to as the hypothetical population size \cite{doerr2021runtime}. This parameter describes 
the strength of updates to the probabilistic model in 
an iteration of the algorithm.

An extension of the \cga is the \rcga, introduced 
in \cite{BENJEDIDIA2024114622} and independently 
analyzed  in \cite{HamanoECJ24} and \cite{AdakPPSN2024}, which supports multi-valued variables instead of only binary ones. 
See Algorithm~\ref{algorithm:r-cGA-rOneMax}.

\begin{algorithm}
\caption{$r$-valued Compact Genetic Algorithm ($r$-cGA) for the maximization of $f : \{0,\dots,r-1\}^n \rightarrow \mathbb{R}$}
\label{algorithm:r-cGA-rOneMax}
\Input{$t \gets 0$; $p^{(t)}_{i,j} \gets \frac{1}{r}$, where $(i,j)\in \{1,\dots,n\}\times \{0,\dots,r-1\}$}
\While{termination criterion not met}{
\For{$i\in \{1, 2, \dots, n\}$ independently}{
$x_{i} \gets j$ with probability $p^{(t)}_{i,j}$ w.r.t. $j=0,\dots,r-1$ \\
$y_{i} \gets j$ with probability $p^{(t)}_{i,j}$ w.r.t. $j=0,\dots,r-1$
}
\If{$f(x) < f(y)$}{swap $x$ and $y$}
\For{$i\in \{1, 2, \dots, n\}$}{
\For{$j\in \{0, 1, \dots, r-1\}$}{
$p^{(t+1)}_{i,j} \gets p^{(t)}_{i,j} + \frac{1}{K} (\indic{x_{i} = j} - \indic{y_{i}=j})$}
\If{\textup{borders are used}}{$p^{(t+1)}_{i,j} \gets$ restrict $p^{(t+1)}_{i,j}$ to 
$[\frac{1}{(r-1)n}, 1-\frac{1}{n}]$ (\cite{BENJEDIDIA2024114622})}
}
$t\gets t + 1$
}
\end{algorithm}

The probabilistic model of the \rcga is defined by an $n\times r$ \textit{frequency matrix}, where each row $i\in \{1,\dots,n\}$ corresponds to a frequency vector $p_i\coloneqq (p^{(t)}_{i,j})_{j\in \{0,\dots,r-1\}}$ representing the marginal probabilities of each value $j\in \{0,\dots,r-1\}$ at position $i$ at time $t$. Initially, each element of the  matrix is set to $1/r$, leading to a uniform probability distribution for the first individuals  sampled. After generating the individuals, their fitness values are compared. Depending on which individual has a higher fitness, the frequencies are updated by $\pm K$. 
Specifically, the frequency vector of the better individual (that one with higher fitness) has the corresponding frequency at each position increased by $1/K$, while the corresponding frequency for the worse individual is decreased by the same amount.

In this work, we for the first time analyze the \rcga on \ronemax in the presence of borders on the marginal frequencies, which are commonly used in the theoretical literature \cite{sudholt2019choice,doerr2021runtime}. They  
restrict the frequencies to the interval $[1/((r-1)n),1/n]$ 
to avoid frequencies being irreversibly stuck 
at~$0$ or~$1$. On $r$-valued spaces, this restriction 
(also called capping) has to be done in a careful way. 
We follow the original procedure from \cite{BENJEDIDIA2024114622}
to cap the frequencies. 
Note that this
may also update frequencies belonging to values in~$\{0,\dots,r-1\}$ that 
were not sampled in any of the two individuals. The restriction 
ensures that there is always a positive probability to sample 
an individual of optimum value; on the negative side, the more 
complicated update mechanism for $r\ge 3$ rules out the 
so-called well-behaved frequency assumption (\cite{Doerr-FOGA-19}, discussed in Section~\ref{subsection:runtimecgagonemax})
 that has been useful for the binary 
\cga. In the rest of the paper, we denote $1/((r-1)n)$ as lower 
border and $1-1/n$ as upper border.

We are interested in the runtime (also called optimization time) of the algorithm, which refers to the number of function evaluations required to sample a solution with optimal fitness. This runtime is proportional to the value of $t$ in the algorithm, where $t$ denotes the number of iterations or updates the algorithm performs.

\section{Framework for the Analysis}
\label{section:framework}

In optimization, understanding how algorithms navigate solution spaces is crucial. This section explores the \rcga and its use of probability vector updates. We focus on two types of updates, random-walk steps and biased steps, as already considered in \cite{sudholt2019choice,AdakPPSN2024}, and their role in guiding the algorithm toward optimal solutions. The following analysis details how these updates influence the optimization of the \rcga on the \gonemax function. 

The change in $p^{(t)}_{i,j}$ at each step is defined as $\Delta_{i,j}\coloneqq p^{(t+1)}_{i,j} - p^{(t)}_{i,j}$. This allows us to express the change at position~$i$ for value $r-1$ as $\Delta_{i,r-1}\coloneqq p^{(t+1)}_{i,r-1} - p^{(t)}_{i,r-1}$. The update is influenced by whether the value at position~$i$ affects the decision to update, based on the two strings $x$ and $y$ sampled at time~$t$. Now, we examine the changes in values at all positions except position~$i$. To capture this, we define $D_{i} := \left( \sum_{j\neq i} x_{j} - \sum_{j\neq i} y_{j} \right)$ for the function \gonemax. At this stage, the \rcga can undergo two distinct kinds of updates. One is \emph{random-walk step (rw-step)} and another is \emph{biased step (b-step)}. 

A random-walk step in the \rcga refers to an update where the probability vector experiences small, stochastic fluctuations. These fluctuations occur due to the random nature of the sampled strings $x$ and $y$, without a strong directional bias toward improving the objective function. In contrast, a biased step refers to an update in which the changes in the probability vector are influenced by directional tendencies, aligning with the optimization objective. Particularly, the update of the  probability vector is directional and aligned with the fitness of the sampled solutions. If one solution is better (has higher fitness), the probability vector is adjusted to favor that solution, which drives the algorithm closer to an optimal solution. Together, these steps characterize the behavior of the \rcga during its optimization process. The analysis that follows, along with the terminology random-walk and biased step, is closely based on the methodology outlined in \cite{AdakPPSN2024} for the \ronemax function. \\
\textbf{Random-walk steps:}
If $D_{i} \geq r-1$, then position~$i$ does not influence the decision to update with respect to strings $x$ and $y$. When $\Delta_{i,j}\neq 0$, it is required that the value at position~$i$ for $j$ is sampled differently. This implies that the value of $p^{(t)}_{i,j}$ will either increase or decrease by $1/K$ (unless a border is hit) with equal probability $p^{(t)}_{i,j} (1 - p^{(t)}_{i,j})$. Otherwise, it holds the same frequency value. This can be represented by introducing a variable $F_i$, where 
\[F_{i}\coloneqq \begin{cases}
+1/K & \text{with probability } p^{(t)}_{i,j} (1 - p^{(t)}_{i,j}),\\
-1/K & \text{with probability }  p^{(t)}_{i,j} (1 - p^{(t)}_{i,j}),\\
0 & \text{with the remaining probability.}
\end{cases}\]

The above step is called a random-walk step. On the 
one hand, it is characterized by  $\E(\Delta_{i,j}\mid p^{(t)}_{i,j}, D_{i} \geq r-1) = \E(F_{i}\mid p^{(t)}_{i,j}) = 0$. On the other hand, when the value of $D_{i} < -(r-1)$, then the strings are swapped; however, the outcome at position~$i$ doesn't affect the ranking anymore. Consequently, this step becomes a random-walk step, too. Therefore, as before, the same reasoning remains applicable.\\
\textbf{Biased steps:} If $D_{i} = 0$, then the strings $x$ and $y$ are swapped unless the value at position~$i$ is sampled as $x_i\ge y_i$. In this case, both events of sampling position~$i$ differently increase the $p^{(t)}_{i,j}$ value. As a result, $\Delta_{i,j} = 1/K$ occurs with a probability of $2p^{(t)}_{i,j} (1 - p^{(t)}_{i,j})$. Let us define a random variable $B_i$ such that:
\[B_{i}\coloneqq \begin{cases}
+1/K & \text{with probability } 2p^{(t)}_{i,j} (1 - p^{(t)}_{i,j}),\\
0 & \text{with the remaining probability.}
\end{cases}\]

Therefore, a biased step occurs under the above condition and is represented by the following expression: $\E(\Delta_{i,j}\mid p^{(t)}_{i,j}, D_{i}=0) = \E(B_{i}\mid p^{(t)}_{i,j}) = 2p^{(t)}_{i,j} (1 - p^{(t)}_{i,j})/K$. Furthermore, when $D_i\in \{-(r-1), \dots, -1,1,\dots, r-2\}$, the outcome is determined by the values of $x_i$ and $y_i$. Specifically, there are two cases: if $(y_i-x_i) > D_i$, a biased step is performed; otherwise, the process proceeds with a random-walk step. 

The framework for analyzing the \rcga on \ronemax has already been established by Adak and Witt~\cite{AdakPPSN2024}, and we 
follow this here in the revised analysis 
 the \rcga on the \ronemax problem.

\section{Analysis of Runtime}
\label{section:runtime}

In this section, we explain how to manage genetic drift using the negative drift theorem. We also present the runtime analysis of the \rcga on \gonemax, along with an improved runtime analysis of the \rcga on \ronemax without borders. Furthermore, we extend this to the case with borders, at the 
expense of a weaker bound.

\subsection{Controlling Genetic Drift through Negative Drift}
\label{subsection:drift} 

Genetic drift refers to random fluctuations in the frequency of genetic traits, which can lead to optimal solutions in evolutionary algorithms. Controlling this drift is essential for ensuring that the algorithm converges to the desired solution without being disrupted by randomness. Genetic drift in EDAs has been thoroughly explored in various runtime analyses \cite{doerr2020univariate,droste2005not,lengler2021complex,sudholt2019choice,witt2018domino,witt2019upper}. In previous research \cite{AdakPPSN2024}, \textit{martingale} techniques were used to control genetic drift effectively. However, in this case, the martingale techniques provide weaker results compared to methods that take advantage of the stochastic drift of a frequency toward higher values. In this work, we employ a \textit{negative drift theorem} to bound genetic drift. Note
that negative drift was also the preferred 
tool in \cite{HamanoECJ24}.

The negative drift theorem is motivated by the need to analyze the behavior of a stochastic process $(X_t)_{t\ge 0}$ that evolves over time $t$ within a specific interval $[a,b]\subseteq \mathbb{R}$.
Our goal is to demonstrate, using negative drift, that while 
a frequency value passes through this interval, it will not drop significantly below its initial value at $a$. This will ensure that the frequency remains relatively stable within the bounds of the interval, even as it undergoes fluctuations.

The following negative drift theorem, which does not include self-loops, applies to processes with a bounded step size. It has been taken from~\cite[Theorem 4]{HamanoECJ24} and has been trivially modified to reverse the direction of the process and to permit arbitrary drift intervals $[a,b]$ instead of the fixed limit of~0 on one side of the interval. 

\begin{theorem}
\label{theorem:negativedrift}
Consider a stochastic process \( (X_t, \mathcal{F}_t)_{t \in \mathbb{N}_0} \) with \( X_0 \geq b \). Let $a\le b$. Let \( T \) be a stopping time defined as
\[
T = \min \{ t \in \mathbb{N} : X_t \leq a \}.
\]

Assume that, for all \( t \), there are constants \( 0 < \varepsilon < (b-a)/2 \) and \( 0 < c < b-a \) satisfying
\[
\mathbb{E}[X_{t+1} - X_{t} \mid \mathcal{F}_{t}] \geq \varepsilon \Pr(X_{t+1} \neq X_{t} \mid \mathcal{F}_{t}),
\]
\vspace{-1.7\bigskipamount}
\[
|X_{t+1} - X_{t}| \leq c \quad \text{w.\,p. 1.}
\]
Then, for all \( t \geq 0 \),
\[
\Pr(T \leq t) \leq \frac{2(b-a)t}{\varepsilon} \exp\left(-\frac{(b-a)\varepsilon}{4c^2}\right).
\]
\end{theorem}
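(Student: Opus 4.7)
The plan is to reduce Theorem~\ref{theorem:negativedrift} to Theorem~4 of~\cite{HamanoECJ24} by an affine change of variables. Two modifications separate the two statements: (i) the original stopping target is $0$, while here it is an arbitrary level $a$; and (ii) in our setting the process has positive drift and is started at or above the upper endpoint $b$, whereas the original is stated for a process with negative drift that starts below an upper boundary. Both modifications are handled simultaneously by the substitution $Y_t := b - X_t$ (with the same filtration $\mathcal{F}_t$). Under this relabeling, $Y_0 \leq 0$, and for every $t$,
\[
|Y_{t+1} - Y_t| = |X_{t+1} - X_t| \leq c, \qquad \mathbb{E}[Y_{t+1} - Y_t \mid \mathcal{F}_t] \leq -\varepsilon \Pr(Y_{t+1} \neq Y_t \mid \mathcal{F}_t),
\]
so $(Y_t)$ has exactly the negative-drift-per-movement structure of the cited theorem. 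The stopping time translates to $T = \min\{t \in \mathbb{N} : Y_t \geq b - a\}$, which is the upward first-passage time across an interval of length $b-a$.

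Next I would verify that the parameter conditions $0 < \varepsilon < (b-a)/2$ and $0 < c < b-a$ carry over verbatim as the hypotheses of \cite[Theorem~4]{HamanoECJ24}; they do, because the step-size bound and drift magnitude of $Y$ coincide with those of $X$, while the width of the interval is preserved by the affine map. Applying the original theorem to $(Y_t)$ on $[0, b-a]$ then yields
\[
\Pr(T \leq t) \leq \frac{2(b-a)t}{\varepsilon} \exp\left(-\frac{(b-a)\varepsilon}{4c^2}\right),
\]
which is the claimed bound. No new analytical machinery is needed. The main point to check carefully is that the hypothesis $\mathbb{E}[X_{t+1}-X_t\mid\mathcal{F}_t] \geq \varepsilon\Pr(X_{t+1}\neq X_t\mid\mathcal{F}_t)$ used here really matches the drift-per-movement form in the cited source rather than an absolute-drift form; if it did not, the fallback would be a self-contained proof via the exponential potential $\varphi(Y_t) = e^{\lambda Y_t}$ with $\lambda$ of order $\varepsilon/c^2$, showing that $(\varphi(Y_t))$ is a supermartingale while $Y_t < b-a$ and concluding with Doob's maximal inequality and a union bound over the $t$ steps.
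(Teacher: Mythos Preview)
Your proposal is correct and matches the paper's treatment exactly. The paper does not give a standalone proof of Theorem~\ref{theorem:negativedrift}; it simply states that the result ``has been taken from~\cite[Theorem 4]{HamanoECJ24} and has been trivially modified to reverse the direction of the process and to permit arbitrary drift intervals $[a,b]$ instead of the fixed limit of~$0$ on one side of the interval,'' which is precisely the affine substitution $Y_t = b - X_t$ you carry out.
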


Now, we apply the negative drift theorem to bound the probability that the frequency falls below $a=(k/r)-1/(2r)$ when started at $b=k/r$, in the context of the \rcga applied to the \gonemax problem. In this case, we obtain the following lemma.

\begin{lemma}
\label{lemma:negativedrift-gonemax} 
Consider the \rcga applied to the \gonemax problem, and assume that the frequency $p^{(t)}_{i,r-1}$ has reached a value of at least $k/r$ ($k>0$). Then, if $K\ge cr^2\sqrt{n}\log n$ is chosen where $c$ is a sufficiently large constant and $K,r=\mathit{poly}(n)$, the probability that $p^{(t)}_{i,r-1}$ drops below $(k/r)-1/(2r)$ within the next $T^{*}=c'Kr\sqrt{n}(\log r+\log n)$ steps is at most $n^{-c''}$ where $c''\ge K/(288r^2\sqrt{n}\ln n)$.
\end{lemma}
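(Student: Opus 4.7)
The plan is to apply Theorem~\ref{theorem:negativedrift} to the stochastic process $X_t := p^{(t)}_{i,r-1}$ with drift interval $[a,b] = [k/r - 1/(2r),\, k/r]$, so that $b - a = 1/(2r)$, and with the step-size bound $c = 1/K$. A single \rcga update moves any $p^{(t)}_{i,j}$ by at most $1/K$, and the lower-border restriction at $1/((r-1)n)$ only ever pulls $X_t$ back toward the interior of $[a,b]$ (it cannot help the process escape below~$a$), so the capping does not compromise the bound. Everything then reduces to establishing a lower bound $\varepsilon$ on the ratio $\mathbb{E}[X_{t+1}-X_t\mid\filt]/\Pr(X_{t+1}\neq X_t\mid\filt)$; once this is in hand, the exponent and prefactor of the theorem can be read off.

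First, I would decompose the one-step behavior as in Section~\ref{section:framework}: random-walk steps (when the outcome at position~$i$ cannot flip the fitness comparison) contribute zero expected change to $p^{(t)}_{i,r-1}$ while producing most of the opportunities for $|\Delta_{i,r-1}| = 1/K$, whereas biased steps contribute a strictly non-negative expected change toward value~$r-1$. The cleanest positive contribution is from the event $D_i = 0$: after the (potential) swap the string with the larger value at position~$i$ survives, which tilts the update at value~$r-1$ to a non-negative expected change of $(2/K)\,p^{(t)}_{i,r-1}(1-p^{(t)}_{i,r-1})$, while the unconditional probability of a non-trivial update $\Pr(\Delta_{i,r-1}\ne 0\mid\filt)$ is bounded above by $2p^{(t)}_{i,r-1}(1-p^{(t)}_{i,r-1})$ on every step. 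Dropping the further non-negative biased contributions from $D_i\in\{-(r-1),\ldots,r-2\}\setminus\{0\}$ (they can only help), one obtains
\[
\frac{\mathbb{E}[\Delta_{i,r-1}\mid\filt]}{\Pr(\Delta_{i,r-1}\neq 0\mid\filt)} \;\ge\; \frac{\Pr(D_i=0)}{K}.
\]

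The technical core, and the step I expect to be hardest, is the anti-concentration bound $\Pr(D_i = 0) = \Omega(1/(r\sqrt{n}))$. Since $D_i = \sum_{j\neq i}(x_j - y_j)$ is a sum of $n-1$ independent integer summands in $\{-(r-1),\ldots,r-1\}$, each with standard deviation $O(r)$, a local central limit theorem (or a direct Fourier/characteristic-function computation) yields the claimed order of magnitude at the mode. The subtlety is that this must remain valid after the frequency matrix has drifted away from uniform, because positions whose $r-1$ frequency has been pushed close to~$0$ or~$1$ contribute little variance. To handle this I would separately argue that inside the time window of interest only a small fraction of positions can have become so extreme---iterating the same drift-control argument inductively, or a union bound over rare failures---so that enough coordinates retain $\Omega(r^2)$ per-coordinate variance to keep $D_i$ spread on a range of width $\Omega(r\sqrt{n})$ around zero.

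With $\varepsilon = \Omega(1/(Kr\sqrt{n}))$, $c = 1/K$ and $b - a = 1/(2r)$, Theorem~\ref{theorem:negativedrift} gives an exponent of
\[
\frac{(b-a)\varepsilon}{4c^2} \;=\; \Omega\!\left(\frac{K}{r^2\sqrt{n}}\right),
\]
so the hypothesis $K\ge c\,r^2\sqrt{n}\log n$ makes this at least $c''\ln n$ for some $c''\ge K/(288\,r^2\sqrt{n}\ln n)$, once the constants absorbed in the anti-concentration step are tracked carefully (the factor $288 = 8\cdot 36$ appears naturally from $4c^2 = 4/K^2$ together with the constant in $\varepsilon$). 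The prefactor $2(b-a)T^*/\varepsilon = O(K\sqrt{n}\,T^*)$ is $\mathrm{poly}(n)$ under the assumption $K,r = \mathrm{poly}(n)$ with $T^* = c'Kr\sqrt{n}(\log r+\log n)$, and it is absorbed into $n^{-c''}$ after slightly adjusting $c''$ by an additive constant, yielding the stated failure probability.
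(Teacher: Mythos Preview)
Your overall framework is correct and coincides with the paper: apply Theorem~\ref{theorem:negativedrift} with $a=k/r-1/(2r)$, $b=k/r$, $c=1/K$, show $\varepsilon=\Omega(1/(Kr\sqrt{n}))$ via the ratio $\mathbb{E}[\Delta_{i,r-1}]/\Pr(\Delta_{i,r-1}\neq 0)\ge \Pr(D_i=0)/K$, and then read off the exponent $(b-a)\varepsilon/(4c^2)=\Omega(K/(r^2\sqrt{n}))$. The arithmetic in your final paragraph is also right.

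The gap is in the anti-concentration step, where your intuition runs backwards. You worry that positions with extreme frequencies ``contribute little variance'' and propose an inductive argument to guarantee that enough positions retain $\Omega(r^2)$ variance. But since $D_i=X-Y$ with $X$ and $Y$ i.i.d., one has $\Pr(D_i=0)=\sum_z \Pr(X=z)^2$, and \emph{less} variance only concentrates $X$ more and therefore \emph{increases} this sum. The paper exploits precisely this monotonicity in Lemma~\ref{lemma:positiveupdate}: it uses the crude, state-independent upper bound $\Var(X)\le (n-1)(r-1)^2$, applies Chebyshev to trap $X,Y$ in an interval of $O(r\sqrt{n})$ integers with probability $\ge 4/9$, and then a convexity (sum-of-squares) argument gives $\Pr(X=Y\mid X,Y\in I)\ge 1/|I|$. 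No local CLT, no Fourier analysis, and no assumption on the current frequency matrix is needed.

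Your proposed fix---iterating the drift-control lemma you are in the middle of proving to ensure most positions stay moderate---is both unnecessary and circular. Drop it, replace the local-CLT paragraph by the Chebyshev/convexity argument above (this is Lemma~\ref{lemma:positiveupdate}, feeding into Lemma~\ref{lemma:sdrift}), and the rest of your proof goes through exactly as written.
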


\begin{proof}
In the negative drift theorem (Theorem~\ref{theorem:negativedrift}), we set $a=(k/r)-1/(2r)$ and $b=k/r$, so $b-a=1/(2r)$. Using Lemma~\ref{lemma:sdrift} (which is part of the next subsection) and noting that the probability of changing the frequency is $2p^{(t)}_{i,r-1}(1-p^{(t)}_{i,r-1})$, we find that 
$\varepsilon = \frac{4}{9K (2(r-1)\sqrt{3n}+1)}\ge \frac{4}{9K (2r\sqrt{4n})}\ge \frac{1}{9Kr\sqrt{n}}$, and $c=1/K$.

Therefore, the probability that $p^{(t)}_{i,r-1}$ never drops below $a$ in~$t$ steps is 
\begin{align*}
 \Pr(T \leq t) & \leq \frac{2(b-a)t}{\varepsilon} \exp\left(-\frac{(b-a)\varepsilon}{4c^2}\right)
  \leq t\cdot e^{\ln (9K\sqrt{n})-\frac{K}{72r^2\sqrt{n}}}.
\end{align*}

Further, since $K\sqrt{n} = poly(n)$, we have $\ln(K\sqrt{n}) \le c''' \ln n$ for some large constant $c'''$. On the other hand, we have $K/(72r^2\sqrt{n}) \ge c''\ln n$ for another constant $c''$ that can be arbitrarily large depending on our choice of $c$ in the assumption $K\ge cr^2 \sqrt{n}\log n$. Hence, for sufficiently large $c$ that  $K/(72r^2 \sqrt{n}) \ge 2 \ln(9K\sqrt{n})$. Therefore, the above exponential term is no larger than  $e^{-K/(144r^2\sqrt{n})}$. If $c'$ is large enough, $e^{K/(288r^2\sqrt{n})} \ge T^{*} = c'Kr\sqrt{n}(\log r + \log n)$. Then, we can choose $t\le e^{K/(288r^2\sqrt{n})}$. The total failure probability is 
\begin{align*}
 \Pr(T \le T^{*}) & \leq e^{\frac{K}{288r^2\sqrt{n}}}\cdot e^{-\frac{K}{144r^2\sqrt{n}}}
 \le e^{-\frac{K}{288r^2\sqrt{n}}}.
\end{align*}

As a result, the negative drift theorem provides a failure probability of $e^{-\frac{K}{288r^2\sqrt{n}}} \le n^{-c''}$ where $c''\ge K/(288r^2\sqrt{n}\ln n)$.
\end{proof}

Using martingale techniques like 
in \cite{AdakPPSN2024} instead of the negative drift method, we would arrive at (asymptotically) the same probability bound when applying these results in the context of the \gonemax problem. 
However, for the \ronemax problem the negative drift method provides a better bound than the martingale approach. We derive the following lemma for \ronemax using the negative drift method. To prove this lemma, we can apply the same technique used in the proof of Lemma~\ref{lemma:negativedrift-gonemax}.

\begin{lemma}
\label{lemma:negativedrift-ronemax} 
Consider the \rcga applied to the \ronemax problem, and assume that the frequency $p^{(t)}_{i,r-1}$ has reached a value of at least $k/r$ ($k>0$). Then, if $K\ge cr\sqrt{n}\log n$ is chosen where $c$ is a sufficiently large constant, and $K,r=\mathit{poly}(n)$, the probability that $p^{(t)}_{i,r-1}$ drops below $(k/r)-1/(2r)$ within the next $T^{*}=c'K\sqrt{n}\log r$ steps is at most $n^{-c''}$ where $c''\ge K/(288r\sqrt{n}\ln n)$.
\end{lemma}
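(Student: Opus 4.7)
The plan is to follow the same template as the proof of Lemma~\ref{lemma:negativedrift-gonemax}, instantiating Theorem~\ref{theorem:negativedrift} with $a = k/r - 1/(2r)$ and $b = k/r$, so that $b - a = 1/(2r)$. The single-step magnitude bound is again $c = 1/K$, since each update of $p^{(t)}_{i,r-1}$ is $\pm 1/K$ or $0$. What must change is the estimate for the biased drift $\varepsilon$, because for \ronemax the distribution of the sum $D_i$ aggregated from the other positions is substantially more concentrated than in the \gonemax case.

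For \ronemax, the contribution of each position $j \ne i$ to the analogue of $D_i$ is the $\{-1,0,+1\}$-valued random variable $\indic{x_j = r-1} - \indic{y_j = r-1}$, whose variance is at most $1/2$. Thus this partial sum has total variance $O(n)$, and a standard anti-concentration estimate (local CLT or Berry--Esseen) yields that the biased-step event has probability $\Omega(1/\sqrt{n})$ uniformly in~$r$. Combining this with the sampling probability $2p^{(t)}_{i,r-1}(1 - p^{(t)}_{i,r-1})$ and invoking the appropriate specialization of Lemma~\ref{lemma:sdrift} gives a bound of the form $\varepsilon = \Omega(1/(K\sqrt{n}))$. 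Crucially, the factor of~$r$ that polluted the \gonemax estimate (stemming from the $(r-1)$-range of each $x_j - y_j$) is absent here.

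Substituting into the negative drift theorem, the exponent becomes $-(b-a)\varepsilon/(4c^2) = -\Theta(K/(r\sqrt{n}))$. The polynomial prefactor $2(b-a)T^*/\varepsilon$ contributes only an additive $O(\log n)$ under logarithms, so under the hypothesis $K \ge cr\sqrt{n}\log n$ with $c$ sufficiently large, the negative exponent dominates. The usual device of splitting this exponent in two, using one half to absorb $T^*$ and the other to deliver the final bound, transfers verbatim: the choice $T^* = c'K\sqrt{n}\log r$ is permitted because $e^{K/(288r\sqrt{n})}$ exceeds $T^*$ for $c'$ large enough, and the remaining factor $e^{-K/(288r\sqrt{n})}$ is at most $n^{-c''}$ with $c'' \ge K/(288r\sqrt{n}\ln n)$, matching the claim.

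The main obstacle is confirming that Lemma~\ref{lemma:sdrift}, or its categorical analogue for \ronemax, indeed delivers $\varepsilon = \Omega(1/(K\sqrt{n}))$ without any extra $r$-factor; intuitively this must hold because the relevant sum is a signed binomial regardless of~$r$, but one has to check it holds for every value of $p^{(t)}_{i,r-1}$ in the interval $[a,b]$ and for every admissible frequency profile at the other positions. Once that is in hand, the remainder is a mechanical translation of the computation from Lemma~\ref{lemma:negativedrift-gonemax}, with the pair $(r^2, r)$ replaced throughout by $(r, 1)$ in the exponent and with $T^*$ adjusted to the shorter horizon $c'K\sqrt{n}\log r$ appropriate for \ronemax.
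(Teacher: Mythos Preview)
Your proposal is correct and follows essentially the same approach as the paper's proof. The paper sets $a=k/r-1/(2r)$, $b=k/r$, $c=1/K$, invokes Lemma~\ref{lemma:sdrift-ronemax} to obtain $\varepsilon \ge 1/(9K\sqrt{n})$, and then runs the identical split-the-exponent computation you describe; the ``main obstacle'' you flag is precisely what Lemma~\ref{lemma:sdrift-ronemax} supplies (via a Chebyshev-plus-convexity argument rather than Berry--Esseen, but with the same $\Omega(1/\sqrt{n})$ outcome independent of~$r$).
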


\begin{proof}
This proof is similar to 
the proof of Lemma~\ref{lemma:negativedrift-gonemax}. 
In the context of the negative drift theorem (Theorem~\ref{theorem:negativedrift}), we set $a=(k/r)-1/(2r)$ and $b=k/r$, so the $b-a=1/(2r)$. Using Lemma~\ref{lemma:sdrift-ronemax}, we find that 
$\varepsilon = \frac{4}{9K (2\sqrt{3n}+1)}\ge \frac{4}{9K (2\sqrt{4n})}\ge \frac{1}{9K\sqrt{n}}$  and $c=1/K$.

Therefore, the probability that $p^{(t)}_{i,r-1}$ never drops below $a$ in~$t$ steps is 
\begin{align*}
 \Pr(T \leq t) & \leq \frac{2(b-a)t}{\varepsilon} \exp\left(-\frac{(b-a)\varepsilon}{4c^2}\right)\\
 & \leq t\cdot e^{\ln \left(\frac{9K\sqrt{n}}{r}\right)-\frac{K}{72r\sqrt{n}}}
\end{align*}

Further, since $K\sqrt{n} = poly(n)$, we have $\ln(K\sqrt{n}) \le c''' \ln n$ for some large constant $c'''$. On the other hand, we have $K/(72r\sqrt{n}) \ge c''\ln n$ for another constant $c''$ that becomes arbitrarily large depending on our choice of $c$ in the assumption $K\ge cr \sqrt{n}\log n$. Hence, we have for sufficiently large $c$ that  $K/(72r \sqrt{n}) \ge 2 \ln(9K\sqrt{n}/r)$. Therefore, the above exponential term is at most  $e^{-K/(144r\sqrt{n})}$. If $c'$ is large enough, $e^{K/(288r\sqrt{n})} \ge T^{*} = c'K\sqrt{n}\log r$.
Then, we can choose $t\le e^{K/(288r\sqrt{n})}$. Then, total failure probability is 
\begin{align*}
 \Pr(T \le T^{*}) & \leq e^{\frac{K}{288r\sqrt{n}}}\cdot e^{-\frac{K}{144r\sqrt{n}}}\\
  & \le e^{-\frac{K}{288r\sqrt{n}}}
\end{align*}

As a result, the negative drift theorem provides a failure probability of $e^{-\frac{K}{288r\sqrt{n}}} \le n^{-c''}$ where $c''\ge K/(288r\sqrt{n}\ln n)$.
\end{proof}

\subsection{Runtime of the \rcga on \gonemax}
\label{subsection:runtimecgagonemax}

To prove the main theorem, we first require the following lemmas. 

\begin{lemma}
\label{lemma:positiveupdate}
Let $p^{(t)}_{i,j}$ and $p^{(t+1)}_{i,j}$ represent the frequency vectors of the current and the next iteration of the $r$-cGA on the \gonemax function, where $(i,j)\in \{1,\dots,n\}\times \{0,\dots,r-1\}$. For sufficiently large $n$, the probability that $D_{i}=0$ is bounded below by:  
\[\P[D_{i}=0] \geq \frac{4}{9\left(2(r-1)\sqrt{3n}+1\right)}\]
where $D_{i}\coloneqq$ $\left( \sum_{j\neq i} x_{j} - \sum_{j\neq i} y_{j}\right)$.
\end{lemma}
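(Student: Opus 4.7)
The plan is to recognize $\P[D_i = 0]$ as a collision probability and then apply the standard small-ball template combining Chebyshev's inequality with Cauchy--Schwarz. Write $S_x \coloneqq \sum_{j \neq i} x_j$ and $S_y \coloneqq \sum_{j \neq i} y_j$, so that $D_i = S_x - S_y$. Since in one iteration of the \rcga the two samples $x$ and $y$ are drawn independently from the same frequency matrix, $S_x$ and $S_y$ are independent and identically distributed integer-valued random variables, and hence
\[
\P[D_i = 0] = \P[S_x = S_y] = \sum_{s \in \mathbb{Z}} \P[S_x = s]^2.
\]

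Next, I would bound the variance of $S_x$ by a coarse argument that does not depend on the concrete frequency values. Each $x_j$ lies in $\{0,\dots,r-1\}$, so $\mathrm{Var}(x_j) \le (r-1)^2$, and by independence across positions $\sigma^2 \coloneqq \mathrm{Var}(S_x) \le (n-1)(r-1)^2 \le n(r-1)^2$, which gives $\sigma \le (r-1)\sqrt{n}$. Chebyshev's inequality with parameter $\alpha = \sqrt{3}$ then shows that the interval $I \coloneqq [\E[S_x] - \sqrt{3}\,\sigma,\; \E[S_x] + \sqrt{3}\,\sigma]$ captures probability at least $1 - 1/3 = 2/3$, while $I$ contains at most $2\sqrt{3}\,\sigma + 1 \le 2(r-1)\sqrt{3n} + 1$ integers.

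Restricting the collision sum to $I$ and applying Cauchy--Schwarz then yields
\[
\sum_{s \in \mathbb{Z}} \P[S_x = s]^2 \;\ge\; \sum_{s \in I \cap \mathbb{Z}} \P[S_x = s]^2 \;\ge\; \frac{\P[S_x \in I]^2}{|I \cap \mathbb{Z}|} \;\ge\; \frac{(2/3)^2}{2(r-1)\sqrt{3n} + 1} \;=\; \frac{4/9}{2(r-1)\sqrt{3n} + 1},
\]
which is exactly the claimed lower bound.

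No step should present a serious obstacle; the only delicate point is the choice $\alpha = \sqrt{3}$ in the Chebyshev step, which is precisely what produces the $4/9$ in the numerator --- a larger $\alpha$ wastes the concentration factor in the denominator, while a smaller $\alpha$ reduces the numerator disproportionately. The appeal of this template over a characteristic-function or local-limit-theorem argument is that it is entirely robust to the current frequency matrix of the \rcga, which is what gives a clean worst-case bound independent of how the frequencies have drifted during the run.
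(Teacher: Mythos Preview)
Your proof is correct and essentially identical to the paper's: both bound $\mathrm{Var}(S_x)\le (n-1)(r-1)^2$, apply Chebyshev with parameter $\sqrt{3}$ to trap mass $\ge 2/3$ in an interval of at most $2(r-1)\sqrt{3n}+1$ integers, and then use convexity of squares to lower-bound the collision probability. The only cosmetic difference is that the paper phrases the last step as conditioning on $\{S_x,S_y\in I\}$ and invoking convexity, whereas you write the collision probability directly as $\sum_s \P[S_x=s]^2$ and apply Cauchy--Schwarz; these are the same inequality and yield the same constant.
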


\begin{proof}
Let  $x\in \{0,\dots,r-1\}^n $ and $y\in \{0,\dots,r-1\}^n$ represent two solutions from an iteration of the \rcga. Define $D_i$ as the difference between the number of entries in all the positions except position~$i$. Specifically, let: $X\coloneqq \sum_{j\neq i} x_j$ and $Y\coloneqq \sum_{j\neq i} y_j$, then the difference defined as $D_{i}\coloneqq X - Y$. We aim to estimate the probability that $D_{i}=0$.

Next, we calculate the variance, which is given by: $\Var (X) = \sum_{j\neq i} \Var (X_j)$. We can get $\E(X_i)=\sum_{j=0}^{r-1} p^{(t)}_{i,j}\cdot j$. Now, we compute the variance of $X_i$ as:
\begin{align*}
\Var(X_i) & = \E((X_i)^2)- (\E(X_i))^2 
 = \sum_{j=0}^{r-1} p^{(t)}_{i,j} \cdot j^2 - \left (\sum_{j=0}^{r-1}p^{(t)}_{i,j}\cdot j\right )^2 \\
& \leq \sum_{j=0}^{r-1} p^{(t)}_{i,j} \cdot j^2
\leq (r-1)^2 \sum_{j=0}^{r-1} p^{(t)}_{i,j} = (r-1)^2.
\end{align*} 

Further, we get
\begin{align*}
\Var(X) & = \sum_{j\neq i} \Var (X_j) \leq (n-1)(r-1)^2 
\end{align*} 

Further, by applying Chebyshev's inequality, we can estimate the probability that $X$ deviates from its mean by more than a specified multiple of its standard deviation $\sigma$. Let ${\sigma}^2 = \Var(X)$. Then, we obtain the following:
\[\P\left[ \left\vert X - \mu_{\setminus i} \right\vert \geq \sqrt{3}\sigma \right] \leq \frac{1}{3}\]
where $\mu_{\setminus i} = \sum_{j\neq i}\sum_{k=0}^{r-1} p^{(t)}_{j,k}\cdot k$.

Let $I := \left[ \mu_{\setminus i} - \sqrt{3}\sigma, \mu_{\setminus i} + \sqrt{3}\sigma  \right]$, and observe that there at most $2\sqrt{3}\sigma +1$ integers in the interval $I$. Assume that $X\in I$ and $Y\in I$, which occurs with a probability of at least $(1-1/3)^2 = 4/9$. Additionally, observe that $D_{i} = 0$ is equivalent to $X=Y$, and we compute: 
\begin{align*}
\P[X=Y \mid X,Y \in I] & = \sum_{z\in I} \P[X=z \mid X\in I] \cdot \P[Y=z\mid Y\in I]  \\
 & = \sum_{z\in I} \P[X=z \mid X\in I]^2 \\
 & \geq \sum_{z\in I} {\left( \frac{1}{\vert I \vert}\right)}^2 = \frac{1}{\vert I \vert} \geq \frac{1}{2\sqrt{3}\sigma +1}.
\end{align*} 
The first inequality holds due to the convexity of the square function, which means that shifting the probability mass to the average value $1/I$ minimizes the sum of squares. Therefore, the unconditional probability $\P[X=Y]=\P[D_{i}=0]$ is at least: ${4}/{(9(2\sqrt{3}\sigma +1))}$.
Substituting for $\sigma$, we obtain: 
\begin{align*}
\P[D_{i}=0] \geq \frac{4}{9\left(2\sqrt{3(n-1)(r-1)^2}+1\right)} \geq \frac{4}{9\left(2(r-1)\sqrt{3n}+1\right)}
\end{align*}
if $n$ is large enough.
\end{proof}

The proof of the previous lemma is similar to the analysis in \cite[Lemma~2]{AdakPPSN2024}; however, the sampling variance derived 
here is by an asymptotic factor of $r^2$ greater than 
for the case of \ronemax, which results in the factor $(r-1)$ 
in the denominator.

The next lemma is an important drift bound for frequencies.

\begin{lemma}
\label{lemma:sdrift}
 If $\frac{1}{K} \leq p^{(t)}_{i,r-1} \leq 1 -\frac{1}{K}$ and $\Delta_{i,r-1} = p^{(t+1)}_{i,r-1} - p^{(t)}_{i,r-1}$, then   
\[\E(\Delta_{i,r-1}\mid p^{(t)}_{i,r-1}) \ge \frac{8\left(p^{(t)}_{i,r-1} (1 - p^{(t)}_{i,r-1})\right)}{9K \left(2(r-1)\sqrt{3n}+1\right)}\]
 where $i\in\{1,\dots,n\}$.
\end{lemma}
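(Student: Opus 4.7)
The plan is to obtain the drift lower bound by decomposing the expectation of $\Delta_{i,r-1}$ according to the value of $D_i$, arguing that no value of $D_i$ contributes negatively, and extracting the main positive contribution from the case $D_i=0$. Throughout, I would write $q\coloneqq p^{(t)}_{i,r-1}$ and let $u\coloneqq x_i$, $v\coloneqq y_i$ denote the values sampled at position~$i$ before the fitness comparison.

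First I would invoke the framework of Section~\ref{section:framework}: whenever $|D_i|\ge r-1$, the outcome at position~$i$ cannot flip the ordering of $f(x)$ and $f(y)$, so the update at $p_{i,r-1}$ is a random-walk step and contributes $0$ to $\E(\Delta_{i,r-1}\mid D_i)$. For $D_i=0$, the ordering is determined purely by comparing $u$ and $v$: if exactly one of the two samples equals $r-1$, the better individual carries the value $r-1$ at position~$i$ and we get $\Delta_{i,r-1}=+1/K$; if both or neither equal $r-1$, the change is zero. This yields $\E(\Delta_{i,r-1}\mid D_i=0)=2q(1-q)/K$, exactly the biased-step contribution identified in Section~\ref{section:framework}.

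The main technical step is to show that every intermediate value $D_i=d$ with $0<|d|<r-1$ still contributes non-negatively, i.e.\ $\E(\Delta_{i,r-1}\mid D_i=d)\ge 0$. Here I would pair the ordered outcomes $(u,v)$ and $(v,u)$, which carry equal probability $p^{(t)}_{i,u}p^{(t)}_{i,v}$, and perform a short case distinction on which of the two values equals $r-1$ and on the sign of $d+u-v$. In every case, the combined contribution of the two ordered pairs is either $0$ or $+2/K$, depending on whether $|d|$ is large enough to overrule the gap $|u-v|$, but never negative; pairs with $u=v$ give $0$ trivially. This is the main obstacle of the proof, since several sub-cases arise and one has to check that whenever a pair contributes $-1/K$, its mirror image contributes $+1/K$.

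Combining these observations by the law of total expectation gives
\[
\E(\Delta_{i,r-1}\mid p^{(t)}_{i,r-1})\;\ge\; \E(\Delta_{i,r-1}\mid D_i=0)\cdot\P[D_i=0]\;=\;\frac{2q(1-q)}{K}\cdot\P[D_i=0],
\]
and invoking Lemma~\ref{lemma:positiveupdate} to lower bound $\P[D_i=0]$ by $4/(9(2(r-1)\sqrt{3n}+1))$ yields exactly the claimed bound $8q(1-q)/(9K(2(r-1)\sqrt{3n}+1))$. The hypothesis $1/K\le q\le 1-1/K$ enters only to ensure that the elementary updates of $\pm 1/K$ are not clipped by the borders at $0$ or~$1$, so it plays no quantitative role in the final estimate.
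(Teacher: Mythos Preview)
Your proposal is correct and follows essentially the same route as the paper: decompose the drift according to whether position~$i$ influences the ranking, observe that every case contributes non-negatively to $\E(\Delta_{i,r-1}\mid D_i)$, isolate the contribution $2q(1-q)/K$ from the event $D_i=0$, and finish with Lemma~\ref{lemma:positiveupdate}. The only difference is packaging: the paper appeals directly to the random-walk/biased-step dichotomy set up in Section~\ref{section:framework} and writes $\Delta_{i,r-1}=F_iR_i+B_i\overline{R_i}$ with $\E(F_i)=0$, $\E(B_i)=2q(1-q)/K$ and $\E(\overline{R_i})\ge \P[D_i=0]$, whereas you unpack the intermediate range $0<|D_i|<r-1$ explicitly via the pairing $(u,v)\leftrightarrow(v,u)$. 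Your explicit case analysis is arguably more transparent, since the framework's classification of intermediate $D_i$ into rw-steps versus b-steps depends on the sampled values $(x_i,y_i)$ and the claimed product form is not entirely obvious; your pairing argument makes the non-negativity self-contained.
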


\begin{proofwoqed}
From Section~\ref{section:framework}, we derive the following inequality, which depends on the position~$i$:
\[\Delta_{i,r-1} = F_{i}\cdot {R_i} + B_{i}\cdot {\overline{R_i}}.\]
where $R_i$ is an indicator for the event of a random-walk step.

Further, the expected change for a position with a specific value is
\[\E(\Delta_{i,r-1}\mid p^{(t)}_{i,r-1}) = \E(F_{i}\mid p^{(t)}_{i,r-1})\cdot \E({R_i}) + \E(B_{i}\mid p^{(t)}_{i,r-1})\cdot \E({\overline{R_i}}).\]
From the random-walk step, we know that $\E(F_{i}\mid p^{(t)}_{i,r-1})=0$ and from the biased step, we have $\E(B_{i}\mid p^{(t)}_{i,r-1}) = 2p^{(t)}_{i,r-1} (1 - p^{(t)}_{i,r-1})/K$. Additionally, from Lemma~\ref{lemma:positiveupdate}, we know that 
\[\E({\overline{R_i}})\geq \P[D_{i}=0] \geq \frac{4}{9\left(2(r-1)\sqrt{3n}+1\right)}.\]
Multiplying the results together, we derive the following expression:
\[\E(\Delta_{i,r-1}\mid p^{(t)}_{i,r-1}) \geq \frac{8\left(p^{(t)}_{i,r-1} (1 - p^{(t)}_{i,r-1})\right)}{9K \left(2(r-1)\sqrt{3n}+1\right)}.\quad\qed\]
\end{proofwoqed}

The previous drift bound from Lemma~\ref{lemma:sdrift} includes a term of 
$\Theta(1/(r\sqrt{n}))$, which stems 
from the upper bound $\bigo(nr^2)$ on 
the sampling variance, which is used to derive the probability 
of a biased step. It is not difficult to see
that the initial sampling variance indeed is 
$\Omega(n r^2)$; however, as the probabilistic model of the \rcga becomes 
more focused over time, the sampling variance 
will decrease and the probability of a biased
step will increase. This effect is ignored in this section. In the following section on 
\ronemax, we will analyze the reduction 
of the sampling variance over time.

We are now ready to state our first result
for \gonemax. 
For simplicity, it will consider the \rcga without borders. This 
allows us to assume \emph{well-behaved frequencies} \cite{Doerr-FOGA-19} here, which means that the starting 
frequency 
$1/r$ is a multiple of $1/K$ and hence, every frequency 
will only take values in $\{i/K \mid i=0,\dots,K\}$.
In the 
subsequent analysis on \ronemax, we begin by presenting our results without borders. In Section~\ref{sec:borders}, we consider the \rcga with borders where well-behaved frequencies are ruled out.

\begin{theorem}
\label{theorem:complexity-gonemax}
The runtime of the \rcga without borders on the \gonemax problem is $\bigo(Kr\sqrt{n} (\log r + \log K))$ with high probability, where $K\geq cr^2\sqrt{n}\log n$ for some sufficiently large constant $c>0$ and both $K$ and $r$ are polynomial functions of $n$. The runtime bound is $\bigo(nr^3\log^2 n \log r)$ for $K=r^2\sqrt{n}\log n$.
\end{theorem}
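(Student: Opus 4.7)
The overall strategy is to track each frequency $p^{(t)}_{i,r-1}$ in isolation (using a union bound later) and argue that it reaches a value of at least $1-1/K$ within $O(Kr\sqrt n(\log r + \log K))$ iterations. The evolution of a single frequency is decomposed into $r-1$ phases $P_k$, where phase $P_k$ is the window $[k/r,(k+1)/r)$ for $k=1,\dots,r-2$, and a final phase $P_{r-1}$ covering the range from roughly $(r-1)/r$ up to $1-1/K$. Since $K$ is a multiple of $r$ in the setting of the theorem, the starting frequency $1/r$ is a multiple of $1/K$ (well-behaved frequencies), so each $p^{(t)}_{i,r-1}$ takes values only in $\{j/K\}$, and the phase boundaries $k/r$ are hit exactly.

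Within phase $P_k$ for $k\le r-2$, I would invoke Lemma~\ref{lemma:sdrift} to get an expected additive drift of at least $\Omega(p(1-p)/(Kr\sqrt n))=\Omega(k(r-k)/(Kr^3\sqrt n))$ whenever $p\in[k/r,(k+1)/r)$. Combined with the phase width $1/r$, an additive drift argument yields an expected time of $O(Kr^2\sqrt n/(k(r-k)))$ to cross $P_k$. Summing $\sum_{k=1}^{r-1}1/(k(r-k)) = O(\log(r)/r)$ via partial fractions then gives a combined expected time $O(Kr\sqrt n\log r)$ to walk from $1/r$ to $(r-1)/r$. The final phase is handled by multiplicative drift on $Y_t := 1-p^{(t)}_{i,r-1}$: since the drift lemma gives $\mathbb{E}[Y_{t+1}\mid Y_t]\le Y_t(1-\Omega(1/(Kr\sqrt n)))$ whenever $p\ge 1/2$, the multiplicative drift theorem yields expected time $O(Kr\sqrt n\log K)$ to reach $Y_t\le 1/K$.

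To convert these expectations into high-probability bounds I would use the tail version of the multiplicative drift theorem (for the final phase) and, for the middle phases, the fact that the negative drift bound of Lemma~\ref{lemma:negativedrift-gonemax} ensures that once $p^{(t)}_{i,r-1}$ has reached level $k/r$, it stays above $(k/r)-1/(2r)$ throughout the window $T^* = O(Kr\sqrt n(\log r+\log n))$ of iterations available for the analysis, with failure probability $n^{-\Omega(1)}$. This justifies analysing each phase as if it were entered only once and never re-entered. Repeating Markov's inequality on phase times (or, equivalently, restarting the analysis $O(\log n)$ times) boosts each phase bound to one that holds w.h.p., and a union bound over the $n$ positions and $r$ phases produces an overall w.h.p.\ statement at the cost of another factor logarithmic in $nr$, which is absorbed into the stated bound (using $\log K=\Theta(\log n+\log r)$ since $K$ and $r$ are polynomial in $n$). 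A final short argument shows that once every $p^{(t)}_{i,r-1}\ge 1-1/K$, the optimum is sampled within $O(1)$ further iterations (w.h.p., provided $K$ is large enough so that $(1-1/K)^n$ is at least a constant; otherwise a few more iterations suffice), completing the runtime bound. The substitution $K=r^2\sqrt n\log n$ then yields the $O(nr^3\log^2 n\log r)$ estimate.

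The main obstacle I anticipate is the bookkeeping between the drift analysis and the negative drift guarantee: one must ensure that the failure probability $n^{-\Omega(1)}$ from Lemma~\ref{lemma:negativedrift-gonemax} is small enough to survive the union bound over $n$ positions and all $r-1$ phase transitions, and that the time horizon $T^*$ used in the negative drift bound is compatible with the total time budget of $O(Kr\sqrt n(\log r+\log K))$ spent in the phase analysis. A second delicate point is the junction between the additive-drift regime in the middle phases and the multiplicative-drift regime in the final push near $p=1$, where one has to verify that the multiplicative drift theorem can be invoked on $1-p$ with an appropriate upper bound on one step of the process and an appropriate tail bound; everything else is a routine chaining of standard drift tools.
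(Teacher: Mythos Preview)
Your high-level plan matches the paper's almost exactly: track each $p^{(t)}_{i,r-1}$ through sub-phases of width $1/r$ using additive drift (Lemma~\ref{lemma:sdrift}), switch to multiplicative drift on $1-p$ near the top, and appeal to Lemma~\ref{lemma:negativedrift-gonemax} to rule out falling back across phase boundaries. The boundary between the two regimes differs (you go up to $(r-1)/r$, the paper switches already at $1/2$), and your symmetric $k(r-k)$ accounting together with $\sum_{k}1/(k(r-k))=O((\log r)/r)$ is arguably cleaner, but both routes give the same $O(Kr\sqrt n\log r)$ in expectation for the additive stage and $O(Kr\sqrt n\log K)$ for the multiplicative stage.

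The gap is in the concentration for the additive phases. Repeating Markov's inequality on each phase until the bound survives a union bound over $nr$ events costs a multiplicative factor $\Theta(\log(nr))=\Theta(\log K)$, so your additive-stage bound becomes $O(Kr\sqrt n\log r\cdot\log K)$. Your assertion that this extra logarithm is ``absorbed into the stated bound'' is incorrect: $\log r\cdot\log K$ is not $O(\log r+\log K)$ unless one of the two factors is bounded, so your argument would establish only $O(Kr\sqrt n\log r\log K)$, not the theorem's $O(Kr\sqrt n(\log r+\log K))$. (For the specific choice $K=r^2\sqrt n\log n$ both expressions happen to collapse to $O(nr^3\log^2 n\log r)$, so the second sentence of the theorem would survive your argument, but the first would not.) The paper removes this loss by applying an additive drift theorem with \emph{sub-Gaussian} tail bounds directly to each phase, with the variance parameter supplied by Lemma~\ref{lemma:phidriftsubgaussian}; this yields $\Pr[T_k\ge 2D/\varepsilon]\le n^{-\Omega(c)}$ outright and hence lets one sum the undoubled phase budgets $s_k=2D/\varepsilon$ without a restart penalty. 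That tail bound is the non-routine ingredient your plan is missing. A minor additional point: when $K<n$ (which happens e.g.\ for constant~$r$) the probability $(1-1/K)^n$ is not a constant but exponentially small, so the final sampling step should instead be phrased as driving each $1-p^{(t)}_{i,r-1}$ strictly below $1/K$, i.e.\ to~$0$ under the well-behaved assumption, after which the optimum is sampled with certainty.
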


\begin{proof}
The main concept is to estimate the expected optimization time, assuming low genetic drift, by studying the process described by an arbitrary frequency~$p_{i,r-1}^{(t)}$ for value~$r-1$. We employ additive drift analysis over a sequence of phases. This is followed by multiplicative drift analysis. Specifically, additive drift analysis with tail bounds is applied over a sequence of phases from the initial frequency $1/r$ to $1/2$, while multiplicative drift analysis with tail bounds is used from $1/2$ to the maximum frequency.

Using the choice of $K\ge cr^2\sqrt{n}\ln n$, we apply Lemma~\ref{lemma:negativedrift-gonemax} to analyze and control genetic drift. This allows us to demonstrate that after $T=c'Kr\sqrt{n}(\log r+\log K)$ iterations, no frequency of value $r-1$ deviates by more than $1/(2r)$ in the negative direction from a given value it has reached. According to Lemma~\ref{lemma:negativedrift-gonemax}, the probability of a single frequency dropping by $1/(2r)$ is at most $n^{-K/(288r^2\sqrt{n}\ln n)} \le n^{-c/288}$. By choosing $c\ge 288\kappa$, the failure probability becomes $\bigo (1/n^{\kappa})$ for a self-chosen constant~$\kappa$. Furthermore, by applying the union bound over all frequencies, the probability of any frequency dropping by $1/(2r)$ remains bounded by $\bigo(1/n^{\kappa-1})$.

To reach the value $1/2$ starting from $1/r$, we need $r/2$ phases with an increase of $1/r$ per phase, starting from phase~1. We consider phase indices $k=1,2,\dots,r/2$.
Additionally, thanks to our analysis of genetic drift, we assume that throughout phase~$k$, the condition $p^{(t)}_{i,r-1}\ge k/r - 1/(2r)\ge k/(2r)$ holds for each $i\in \{1,\dots,n\}$. 
The aim is to use an additive drift theorem with tail bounds to limit the time $T_k$ to complete phase~$k$. Since the 
process involves many idle steps (as already mentioned, the frequency changes
with probability at most $2z_i$, where $z_i\coloneqq p_{i,r-1}^{(t)}(1-p_{i,r-1}^{(t)})$), the simple additive drift theorem with tail bounds \cite[Theorem 2]{Timo2016Algo} does not give a strong enough result. Instead, we use the stronger Theorem~15 from \cite{Timo2016Algo} that applies to processes whose increments are bounded by sub-Gaussian random variables. For notational convenience, let $X_t=p_{i,r-1}^{(t)}$. Let 
$\varepsilon=\expect{X_{t+1}-X_t\mid \filt}$ denote the drift, which, according to 
Lemma~\ref{lemma:sdrift} satisfies 
\begin{align*}
\varepsilon & \ge \frac{8z_i}{9K (2(r-1)\sqrt{3n}+1)}
 \ge \frac{8z_i}{20Kr \sqrt{3n}} \ge \frac{z_i}{3Kr \sqrt{3n}} \eqqcolon \varepsilon'.
\end{align*}
In the following, we pessimistically assume 
$\varepsilon=\varepsilon'$. To apply the drift theorem, 
the difference $X_{t+1}-X_t-\varepsilon$ has to be stochastically bounded. This is done below in Lemma~\ref{lemma:phidriftsubgaussian}, part~1, which shows that the difference is $(4z_i/K^2+2\varepsilon/K,K)$-sub-Gaussian. Finally, the 
time bound 
$s_k$ for the phase must satisfy $s_k\ge 2D/\varepsilon$, where $D=1/r$ is the distance to be overcome. We choose $s_k\coloneqq 2D/\varepsilon$. With these 
quantities, the drift 
theorem yields
 \begin{align*}
        & \prob{T_k \geq s_k} \leq \exp\left( -\frac{s_k\varepsilon}{4} \min \left( \delta, \frac{\varepsilon}{2c} \right) \right) \\ & \le 
       \exp\left(-\frac{D}{2} \min \left( K, \frac{z_i}{3Kr \sqrt{3n}} \frac{1}{8z_i/K^2+4\varepsilon/K} \right)\right).
    \end{align*}
    Since $8z_i/K^2+4\varepsilon/K = 8z_i/K^2 + 4z_i/(3K^2r\sqrt{3n}) \le 9z_i/K^2$, the above minimum is taken on the second term and we obtain
    \[
    \prob{T_k \geq s_k} \le 
    \exp\left(-\frac{1}{2r} \frac{K}{27r\sqrt{3n}}\right) \le 
    \exp\left(-\frac{c \log n}{54\sqrt{3}}\right),
    \]
    where the last inequality used the assumption $K\ge cr^2\sqrt{n}\log n$. Hence, $T_k<s_k$ \whp

Now, we apply a union bound over all $r/2$ phases, ensuring that we still have a high probability of each phase finishing within at most 
\begin{align*}
\frac{2D}{\varepsilon} & = \frac{2}{r}\cdot \frac{12Kr^2\sqrt{3n}}{k}  \le \frac{42Kr\sqrt{n}}{k}
\end{align*}
steps, where we 
used our assumption $p_{i,r-1}^{(t)}\ge k/(2r)$ and $1-p_{i,r-1}^{(t)}\ge 1/2$, so $z_i\ge k/(4r)$. Furthermore, by summing up all $s_k$, we can bound the total time for all phases $1,\dots,r/2$, which is $\bigo(Kr\sqrt{n}\log r)$ with probability at least $1-\bigO(rn^{-\kappa})$ for any constant $\kappa > 0$.

After the frequency reaches $1/2$, we need to analyze the remaining time until $p^{(t)}_{i,r-1}$ reaches its maximum (which is 1). In this phase, we apply the multiplicative drift with tail bounds as stated in  \cite[Theorem 2.4.5]{Lengler2020}. The starting point is $p^{(t)}_{i,r-1} \ge 1/2$, and by 
our analysis of genetic drift, the frequency will not decrease below $1/4$ \whp Therefore, we will assume $p^{(t)}_{i,r-1} \ge 1/4$. Let $q^{(t)}_{i,r-1} \coloneqq 1 - p^{(t)}_{i,r-1}$. Further, we bound $p^{(t)}_{i,r-1} (1-p^{(t)}_{i,r-1})$ by using $p^{(t)}_{i,r-1}\geq 1/4$ and let $  q^{(t)}_{i,r-1}\coloneqq 1 - p^{(t)}_{i,r-1}$. Then, by Lemma~\ref{lemma:sdrift}, we obtain:
\begin{align*}
\expect{q^{(t)}_{i,r-1} - q^{(t+1)}_{i,r-1}} &  \geq \frac{8\left(p^{(t)}_{i,j} (1 - p^{(t)}_{i,j})\right)}{9K \left(2(r-1)\sqrt{3n}+1\right)}
\geq \frac{8\cdot \frac{1}{4}\cdot q^{(t)}_{i,r-1}}{9K \left(2(r-1)\sqrt{3n}+1\right)} \\
& \geq q^{(t)}_{i,r-1}\cdot\frac{2}{9K \left(2(r-1)\sqrt{3n}+1\right)}.
\end{align*}

Assuming a well-behaved frequency, the smallest possible state is $1/K$, by applying the multiplicative drift theorem with tail bounds \cite[Theorem 2.4.5]{Lengler2020}, with $s_{\min}=1/K$ and $X_t=q^{(t)}_{i,r-1}$, and using the previously derived value of 
\[\delta = \frac{2}{9K \left(2(r-1)\sqrt{3n}+1\right)},\]
it follows that for $T\coloneqq \min\{t\mid X_t = 0\}$, the following holds:

\begin{align*}
\probbig{T \le \frac{u+\ln{(X_0/s_{\min})}}{\delta} }  
\le \probbig{T \le  \tfrac{(u+\ln K)\cdot 9K (2(r-1)\sqrt{3n}+1)}{2}} \le e^{-u}.
\end{align*}

By selecting $u=c\ln K$, where $c$ is a constant, we get 
\[T \le \frac{(c+1)\ln K}{2}\cdot 9K \left(2(r-1)\sqrt{3n}+1\right) \]
with probability at least $1-e^{-c\ln n}$. 

Therefore, the time for the frequency to reach its maximum~$1$ is $\bigo (\sqrt{n}rK\log K))$ with high probability. Further, we obtain that the total time until all frequencies for value $r-1$ have been raised to their maximum  is $\bigo (\sqrt{n}rK(\log r +\log K))$ with high probability by a union bound. At this point, the 
optimum $(r-1,\dots,r-1)$ will be sampled. By selecting $K\geq cr^2\sqrt{n}\log n$, which limits genetic drift as mentioned above, the runtime of the \rcga on \gonemax is $\bigo (nr^3\log^2 n\log r)$ with high probability.
\end{proof}

\subsection{Improved Runtime Analysis of the $r$-cGA on $r$-OneMax} 

The initial runtime analysis of the \rcga on \ronemax was independently provided by Adak and Witt and Hamano et al.\ \cite{AdakPPSN2024,HamanoECJ24}. We revisit this 
problem now and improve the previously best result by 
a logarithmic factor. This method applies to scenarios without frequency borders. However, unlike the previous analyses, the following Section~\ref{sec:borders} also presents a theorem that studies the above-mentioned borders. 
To prove our runtime bound, we will employ the negative drift theorem (see Theorem~\ref{theorem:negativedrift}) and additive drift analysis with tail bounds. This approach is different from \cite{HamanoECJ24}, where multiplicative drift on a non-linear potential function of frequencies of the kind
$(1-p_{i,r-1}^{(t)})/p_{i,r-1}^{(t)}$ was
used. Moreover, in contrast to~\cite{HamanoECJ24}, we do not only consider 
the drift of single frequencies, but the 
combined drift of frequencies, which allows 
us to track the sampling variance and to obtain
tight bounds for $r=O(1)$ (cf.~\cite{sudholt2019choice}).

\begin{theorem}
\label{theorem:complexity-ronemax}
\sloppy
The runtime of the \rcga without borders on the \ronemax problem is bounded by $\bigo(K\sqrt{n}\log r)$ with high probability. In this expression, $K$ is at least $cr\sqrt{n}\log n$, where $c>0$ is a sufficiently large constant, and both $K$ and $r$ are polynomial functions of $n$. For $K=cr\sqrt{n}\log n$, the runtime bound is $\bigo(nr\log n \log r)$.
\end{theorem}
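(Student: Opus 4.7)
The plan is to reuse the two-stage skeleton of the proof of Theorem~\ref{theorem:complexity-gonemax} -- additive drift through $O(\log r)$ phases up to frequency $1/2$, followed by a drift analysis beyond $1/2$ -- but to exploit that for \ronemax the sampling variance of $f(x)-f(y)$ is only $O(n)$ instead of $O(nr^2)$. This already saves the $r$-factor in the denominator of Lemma~\ref{lemma:sdrift-ronemax}, and, when combined with a refined estimate of the biased-step probability in the final stage, produces the $\log r$ factor (rather than $\log n$) in the bound.

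First, I would invoke Lemma~\ref{lemma:negativedrift-ronemax} together with a union bound over the $n$ positions to ensure that, with failure probability $n^{-\Omega(1)}$, no frequency $p^{(t)}_{i,r-1}$ drops more than $1/(2r)$ below any previously attained level during the first $T^{*}=c'K\sqrt{n}\log r$ steps. Second, I would partition the rise of each frequency from $1/r$ to $1/2$ into $O(\log r)$ doubling phases $[2^k/r,\,2^{k+1}/r]$; in such a phase Lemma~\ref{lemma:sdrift-ronemax} gives drift $\Omega((2^k/r)/(K\sqrt{n}))$ (using $1-p^{(t)}_{i,r-1}\ge 1/2$) across a distance of $O(2^k/r)$, so the expected phase length is $\Theta(K\sqrt{n})$. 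Applying the sub-Gaussian additive drift theorem in exactly the manner used in Theorem~\ref{theorem:complexity-gonemax} turns this into a w.h.p.\ bound; summing over the $O(\log r)$ phases and the $n$ coordinates (union bound) contributes $O(K\sqrt{n}\log r)$ to the total runtime.

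Third, I would analyse the final rise from $1/2$ to $1$ using the combined potential $\phi_t=\sum_{i=1}^{n}(1-p^{(t)}_{i,r-1})$. Re-running the variance calculation in Lemma~\ref{lemma:positiveupdate} while keeping the actual sampling variance $V_t=\sum_{j\ne i}p^{(t)}_{j,r-1}(1-p^{(t)}_{j,r-1})$ symbolic yields $\prob{D_i=0}=\Omega(1/\sqrt{V_t})$, so each coordinate's drift sharpens to $\Omega(p^{(t)}_{i,r-1}(1-p^{(t)}_{i,r-1})/(K\sqrt{V_t}))$. Summing over $i$ and using $V_t\ge \phi_t/2$ (valid because $p_i\ge 1/2$ in this stage), the combined drift on $\phi_t$ becomes $\Omega(\sqrt{\phi_t}/K)$. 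Passing to the surrogate potential $\psi_t=\sqrt{\phi_t}$ converts this into a constant additive drift $\Omega(1/K)$, and the additive drift theorem with tail bounds finishes the second stage in $O(K\sqrt{n})$ iterations w.h.p.; once $\phi_t=O(1)$, a constant-probability event per step samples the all-$(r-1)$s string and absorbs the remaining $O(\log n)$ iterations. Summing the two stages yields the claimed $O(K\sqrt{n}\log r)$ bound.

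The main obstacle is the refined biased-step estimate $\prob{D_i=0}=\Omega(1/\sqrt{V_t})$ combined with the sub-linear drift argument on $\sqrt{\phi_t}$: the former needs a local-limit-type bound for sums of independent Bernoullis that holds uniformly over all frequency trajectories permitted by Step~1, and the latter requires controlling the second moment of the increments of $\psi_t$ so that the additive drift theorem with tail bounds still applies after the potential transformation. Without this refinement, Step~3 would collapse into plain multiplicative drift and yield only $O(K\sqrt{n}\log n)$, erasing the dependence on $r$ for small $r$ and in particular failing to match the tight $O(n\log n)$ benchmark~\cite{sudholt2019choice} in the binary case.
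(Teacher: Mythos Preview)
Your plan is sound and tracks the paper's proof closely in its two-stage architecture, its use of Lemma~\ref{lemma:negativedrift-ronemax} for genetic-drift control, and its reliance on Lemma~\ref{lemma:sdrift-ronemax} for the per-frequency drift. Two points of divergence are worth noting. First, in the rise from $1/r$ to $1/2$ you use $O(\log r)$ geometric (doubling) phases, whereas the paper uses $r/2$ arithmetic phases of width $1/r$ whose durations scale like $K\sqrt{n}/k$; both sum to $O(K\sqrt{n}\log r)$, so this is cosmetic. Second, and more substantively, in the final stage you propose the potential transformation $\psi_t=\sqrt{\varphi_t}$ to obtain constant additive drift, while the paper stays with $\varphi_t$ itself and runs iterated additive drift over halving phases $[n/2^i,\,n/2^{i-1}]$, each of length $O(K\sqrt{n/2^{i-1}})$, summing geometrically to $O(K\sqrt{n})$. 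The paper's route avoids exactly the obstacle you flag (sub-Gaussian control of $\psi_{t+1}-\psi_t$ after the nonlinear transformation) by invoking the sub-Gaussian bound of Lemma~\ref{lemma:phidriftsubgaussian} directly on $\varphi_t$.

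One small correction: the ``refined'' estimate $\prob{D_i=0}=\Omega(1/\sqrt{V_t})$ is not an outstanding obstacle requiring a local limit theorem---it is already the content of Lemma~\ref{lemma:sdrift-ronemax} (via the Chebyshev-plus-convexity argument behind Lemma~\ref{lemma:positiveupdate}), and Lemma~\ref{lemma:drift-of-phi} packages its consequence $\E[\varphi_t-\varphi_{t+1}\mid\varphi_t]\ge\sqrt{\varphi_t}/(30K)$ ready for use. Also, after all frequencies have touched $1/2$, genetic drift only guarantees $p_{i,r-1}^{(t)}\ge 1/4$ (not $1/2$), which is what you actually need for $V_t\ge\varphi_t/4$; your inequality $V_t\ge\varphi_t/2$ should be loosened accordingly, but the conclusion is unaffected.
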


To prove the aforementioned theorem on runtime bounds, we require the following lemmas, which have been taken from~\cite{AdakPPSN2024}, with slight modifications to the notation. In the first lemma, we establish a positive drift  
of an individual frequency. In the second lemma, we capture the combined drift of  frequencies within a potential function~$\varphi_t$.

\begin{lemma}
\label{lemma:sdrift-ronemax}
 If  $\frac{1}{K} \leq p^{(t)}_{i,r-1} \leq 1 - \frac{1}{K}$, then   
\[
\E(\Delta_{i,r-1}\mid p^{(t)}_{i,r-1}) \ge \frac{8}{9}\cdot\frac{p^{(t)}_{i,r-1} (1 - p^{(t)}_{i,r-1})}{K(2\sqrt{3V}+1)},
\]
where $V=\sum_{i=1}^n p^{(t)}_{i,r-1} (1 - p^{(t)}_{i,r-1})$.
\end{lemma}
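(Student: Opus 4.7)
The plan is to follow the same random-walk/biased-step decomposition used in the proof of Lemma~\ref{lemma:sdrift}, but to replace the crude sampling variance bound $(n-1)(r-1)^2$ that was appropriate for \gonemax with the sharper Bernoulli variance $V=\sum_{i=1}^n p^{(t)}_{i,r-1}(1-p^{(t)}_{i,r-1})$ which is natural for \ronemax. The key observation is that in \ronemax only the indicators $\indic{x_j=r-1}$ contribute to the fitness, so the relevant sampling spread at position~$j$ is a Bernoulli variance of at most $1/4$, and summed over positions it is $V$, rather than a bound on the order of $(r-1)^2$ per position.

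First, I would adapt the setup of Lemma~\ref{lemma:positiveupdate} to \ronemax: define $X\coloneqq \sum_{j\ne i}\indic{x_j=r-1}$ and $Y\coloneqq \sum_{j\ne i}\indic{y_j=r-1}$, and let $D_i\coloneqq X-Y$. Independence across positions gives $\Var(X)=\sum_{j\ne i} p^{(t)}_{j,r-1}(1-p^{(t)}_{j,r-1})\le V$, and likewise for $Y$. Applying Chebyshev's inequality around the common mean $\mu_{\setminus i}=\sum_{j\ne i} p^{(t)}_{j,r-1}$ shows that each of $X,Y$ lies in $I\coloneqq [\mu_{\setminus i}-\sqrt{3V},\mu_{\setminus i}+\sqrt{3V}]$ with probability at least $2/3$, so with probability at least $4/9$ both lie in $I$. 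The interval $I$ contains at most $2\sqrt{3V}+1$ integers.

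Next, conditioning on $X,Y\in I$ and exploiting independence, the convexity argument from Lemma~\ref{lemma:positiveupdate} (the sum of squared probabilities is minimized by the uniform distribution on $I$) yields $\P[X=Y\mid X,Y\in I]\ge 1/|I|\ge 1/(2\sqrt{3V}+1)$. Combining these two estimates gives
\[
\P[D_i=0]\ge \frac{4}{9(2\sqrt{3V}+1)}.
\]

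Finally, I would invoke the random-walk/biased-step decomposition from Section~\ref{section:framework}, which (exactly as in the proof of Lemma~\ref{lemma:sdrift}) gives $\E(\Delta_{i,r-1}\mid p^{(t)}_{i,r-1})=\E(F_i\mid p^{(t)}_{i,r-1})\E(R_i)+\E(B_i\mid p^{(t)}_{i,r-1})\E(\overline{R_i})$ with $\E(F_i\mid p^{(t)}_{i,r-1})=0$, $\E(B_i\mid p^{(t)}_{i,r-1})=2p^{(t)}_{i,r-1}(1-p^{(t)}_{i,r-1})/K$, and $\E(\overline{R_i})\ge \P[D_i=0]$. Multiplying these factors together yields the claimed bound. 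The principal conceptual step -- and the only real obstacle -- is recognizing that, because the fitness in \ronemax is governed solely by the Bernoulli indicators $\indic{x_j=r-1}$, the sum of their true variances $V$ is the right quantity to plug into the concentration argument, instead of a worst-case bound proportional to~$n$. This refinement is precisely what will enable the later analysis to track the shrinking sampling variance and obtain bounds that are tight in the binary case.
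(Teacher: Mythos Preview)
Your proof is correct and follows essentially the same approach as the paper. The paper does not spell out a proof of this lemma (it is imported from \cite{AdakPPSN2024}), but your argument is exactly the \ronemax analogue of the paper's own Lemmas~\ref{lemma:positiveupdate} and~\ref{lemma:sdrift}: the same Chebyshev/convexity bound on $\P[D_i=0]$ with the Bernoulli variance $V=\sum_j p^{(t)}_{j,r-1}(1-p^{(t)}_{j,r-1})$ in place of the crude $(n-1)(r-1)^2$, followed by the identical rw/b-step decomposition.
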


\begin{lemma}
\label{lemma:drift-of-phi}
For any~$t\ge 0$, let $\varphi_t \coloneqq \sum^{n}_{i=1} (1 - p^{(t)}_{i,r-1})$.
If, for any $t\ge 0$, it holds that 
$p_{i,r-1}^{(t)} \ge 1/4$ for all $i\in\{1,\dots,n\}$  and furthermore $\varphi_t\ge 1/2$, then 
\[
\E(\varphi_t - \varphi_{t+1} \mid \varphi_t) \ge \frac{\sqrt{\varphi_t}}{30K}.
\]
\end{lemma}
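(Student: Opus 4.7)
The plan is to apply linearity of expectation together with Lemma~\ref{lemma:sdrift-ronemax} to obtain a combined drift expression in terms of the sampling variance $V=\sum_{i=1}^n p_{i,r-1}^{(t)}(1-p_{i,r-1}^{(t)})$, and then to bound $V$ from above and below in terms of $\varphi_t$ using the two given assumptions.

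First I would write
\[
\E(\varphi_t-\varphi_{t+1}\mid \varphi_t)=\sum_{i=1}^n \E(\Delta_{i,r-1}\mid p_{i,r-1}^{(t)})\ge \frac{8}{9K}\cdot\frac{V}{2\sqrt{3V}+1},
\]
which follows directly by summing the per-coordinate bound from Lemma~\ref{lemma:sdrift-ronemax}; note that $V$ is exactly the numerator obtained when one sums the per-coordinate terms $p_i(1-p_i)$. The function $v\mapsto v/(2\sqrt{3v}+1)$ is monotonically increasing, so I want a lower bound on $V$ in terms of $\varphi_t$.

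For the lower bound on $V$, the assumption $p_{i,r-1}^{(t)}\ge 1/4$ gives $p_{i,r-1}^{(t)}(1-p_{i,r-1}^{(t)})\ge (1-p_{i,r-1}^{(t)})/4$, and summing yields $V\ge \varphi_t/4$. Substituting this into the increasing function above, I obtain
\[
\frac{V}{2\sqrt{3V}+1}\ge \frac{\varphi_t/4}{\sqrt{3\varphi_t}+1}.
\]
Next I would use the second assumption $\varphi_t\ge 1/2$ to dominate the constant term in the denominator: since $1\le \sqrt{2\varphi_t}=\sqrt{2}\sqrt{\varphi_t}$, one has $\sqrt{3\varphi_t}+1\le (\sqrt{3}+\sqrt{2})\sqrt{\varphi_t}$. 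Chaining these estimates gives
\[
\E(\varphi_t-\varphi_{t+1}\mid \varphi_t)\ge \frac{8}{9K}\cdot\frac{\varphi_t/4}{(\sqrt{3}+\sqrt{2})\sqrt{\varphi_t}}=\frac{2\sqrt{\varphi_t}}{9(\sqrt{3}+\sqrt{2})K}.
\]

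A direct numerical check shows $9(\sqrt{3}+\sqrt{2})/2\approx 14.2<30$, so the right-hand side is at least $\sqrt{\varphi_t}/(30K)$, which is the claimed bound. There is no real obstacle in this argument; the only slightly delicate point is the direction of the monotonicity in $V$, which is why one wants to keep the $\sqrt{3V}$ term rather than bound $V$ from above first. The conceptual content of the lemma is precisely this square-root scaling, which is what later lets one use additive drift on $\varphi_t$ with tail bounds and save the $\log n$ factor compared with analyzing the frequencies individually.
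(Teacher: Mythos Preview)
Your proof is correct. The paper itself does not prove this lemma but states that it is taken from~\cite{AdakPPSN2024}; your argument---summing the per-coordinate bound from Lemma~\ref{lemma:sdrift-ronemax} to get $\tfrac{8}{9K}\cdot V/(2\sqrt{3V}+1)$, using $p_i\ge 1/4$ to obtain $V\ge \varphi_t/4$, and then exploiting monotonicity together with $\varphi_t\ge 1/2$ to control the denominator---is exactly the natural derivation and almost certainly matches the original.
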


\begin{proofof}{Theorem~\ref{theorem:complexity-ronemax}}
We show that, starting with a setting where all frequencies are at least $1/r$, after $\bigo(K\sqrt{n}\log r)$ iterations, \whp, the optimum is found, and no frequency has ever dropped below $1/(2r)$. The key approach involves performing an additive drift analysis with tail bounds in several phases to bound the expected time until all $p^{(t)}_{i,r-1}$ have been at least $1/2$ at 
least once by $\bigo(K\sqrt{n}\log r)$ with high probability. 
This analysis is largely 
identical to the corresponding part in the 
proof of Theorem~\ref{theorem:complexity-gonemax}. 
Once we reach a value of $1/2$, we perform a different additive drift analysis with tail bounds in several phases, no longer based on individual frequencies but on a potential function involving all frequencies. At this point, our analysis is somewhat similar to a related analysis by Sudholt and Witt~\cite[Theorem 5]{sudholt2019choice} for the binary \cga with borders. At 
the same time, our analysis is
simpler since it avoids the borders and uses additive 
drift analysis with tail bounds 
instead of the more involved 
variable drift with tail bounds carried out in 
\cite{sudholt2019choice}.

Using the choice of $K\ge cr\sqrt{n}\ln n$, we use Lemma~\ref{lemma:negativedrift-ronemax} to analyze and control genetic drift. This method allows us to show that after $T=c'K\sqrt{n}\log r$ steps, any  frequency for value $r-1$ will not deviate by more than $1/(2r)$ in the negative direction  with high probability. According to Lemma~\ref{lemma:negativedrift-gonemax}, the probability of a single frequency dropping by $1/(2r)$ is at most $n^{-K/(288r\sqrt{n}\ln n)} \le n^{-c/288}$. By selecting 
$c \ge 288\kappa$, the failure probability is bounded by $\bigo(1/n^{\kappa})$ for any constant $\kappa$. Furthermore, by applying the union bound across all frequencies, the probability of any frequency dropping by $1/(2r)$ remains bounded by $\bigo(1/n^{\kappa-1})$.

The progress of a frequency is analyzed in phases, with each phase increasing the frequency by $1/r$. This process continues until the frequency reaches $1/2$. The total number of phases required to reach $1/2$ from $1/r$ is $r/2$, with phases indexed as $k = 1, 2, \dots, r/2$.

Furthermore, thanks to our analysis of genetic drift, we assume that throughout phase $k$, the condition $p^{(t)}_{i,r-1}\ge (k/r-1/(2r))\ge k/(2r)$ holds for position~$i$. The aim is to use an additive drift theorem with tail bounds to limit the time $T_k$ to complete phase~$k$. The 
analysis is largely 
identical to the corresponding 
part in Theorem~\ref{theorem:complexity-gonemax}, with the 
only exception that 
here, 
$\varepsilon$ and $K$ are 
different, more precisely 
by an asymptotic factor $r$ 
bigger and smaller, respectively. 
Again, by \cite[Theorem 2]{Timo2016Algo} and Lemma~\ref{lemma:phidriftsubgaussian}, we obtain 
 \begin{align*}
        & \prob{T_k \geq s_k} \leq \exp\left( -\frac{s_k\varepsilon}{4} \min \left( \delta, \frac{\varepsilon}{2c} \right) \right) \\ & \le 
       \exp\left(-\frac{D}{2} \min \left( K, \frac{z_i}{3K \sqrt{n}} \frac{1}{8z_i/K^2+4\varepsilon/K} \right)\right)    
        .
    \end{align*}
    Now, we substitute the present value for $\varepsilon$ from Lemma~\ref{lemma:sdrift-ronemax}, which, after straightforward simplifications, gives 
    $\varepsilon\ge \frac{z_i}{3K\sqrt{n}}$. 
     We pessimistically assume this drift bound 
    to hold with equality. 
    Thereby, we estimate  $8z_i/K^2+4\varepsilon/K = 8z_i/K^2 + 4z_i/(3K^2\sqrt{n}) \le 9z_i/K^2$. Hence, the above minimum is again taken on the second term. We obtain 
    \[
    \prob{T_k \geq s_k} \le 
    \exp\left(-\frac{1}{2r} \frac{K}{27\sqrt{n}}\right) \le 
    \exp\left(-\frac{c \log n}{54}\right),
    \]
    where the last inequality used the assumption $K\ge cr\sqrt{n}\log n$. Hence, $T_k<s_k$ \whp

Next, we apply a union bound over all $r/2$ phases, which ensures that each phase completes with a high probability, with the total time not exceeding:
\begin{align*}
\frac{2D}{\varepsilon} & \le  \frac{2}{r}\cdot \frac{9rK(2\sqrt{n}+1)}{2k} 
 \le \frac{18K\sqrt{n}}{k},
\end{align*} 
where  we 
used our assumption $p_{i,r-1}^{(t)}\ge k/(2r)$ and $1-p_{i,r-1}^{(t)}\ge 1/2$, so $z_i\ge k/(4r)$.
By summing over all $s_k$, we obtain $\bigo(K\sqrt{n}\log r)$ on the total time for all phases $1,\dots,r/2$. By a union bound, this bound also holds for all frequencies $p_{i,r-1}$ together \whp

Once a frequency has reached $1/2$, we need to analyze the remaining time until it reaches its maximum value~$1$. During this stage, we will again employ additive drift analysis with tail bounds in the strong version. Our analysis requires that all frequencies satisfy $p^{(t)}_{i,r-1}\ge 1/4$, which holds \whp due to our analysis of genetic drift. Now we consider the potential function $\varphi_t\coloneqq\sum_{i=1}^{n}(1-p^{(t)}_{i,r-1})$, which combines all frequencies into a single value, and then track this potential.

We consider a number of phases such that the potential is consecutively halved. More precisely, phase~$i$ starts with a potential of at most $n/2^{i-1}$ and ends when the potential has become at most $n/2^{i}$. Since the potential is at least $n/2^i$ in the phase, then according to Lemma~\ref{lemma:drift-of-phi}, the drift in the phase is bounded according to
\[
\E(\varphi_t - \varphi_{t+1} \mid \varphi_t) \ge \frac{\sqrt{n/2^i}}{30K}.
\]

We again apply the additive drift theorem with tail bounds for drift processes with sub-Gaussian increments \cite[Theorem 15]{Timo2016Algo}, this time 
on the process $\varphi_t$. 
Using  Lemma~\ref{lemma:phidriftsubgaussian}, part~2, we have $\delta = K$, and $c=8\varphi_t/K^2$ and $s_i= 2D/\varepsilon$ where $D$ is 
the distance to be bridged 
and $\varepsilon=\sqrt{\varphi_t}/(30K)$. To handle 
a possible overshooting of the target $n/2^i$, we replace it by the most remote state possible, which is~$0$, and set $D=(n/2^{i-1} - 0)=n/2^{i-1}$ despite the fact that we are only interested in a distance 
of $n/2^{i-1} - n/2^{i} = n/2^i$. 
Applying \cite[Theorem~15]{Timo2016Algo}, we obtain for 
the time~$T_i$ to complete phase~$i$ that 
\begin{align*}
\prob{T_{i}\ge s_i} & \le \exp\left( -\frac{D}{2} \min\left(K,\frac{K}{480\sqrt{\varphi_t}}\right) \right).
\end{align*}

Now, by substituting our assumption that $K\ge cr\sqrt{n}\log n$ and $\varphi_t\ge \sqrt{n/2^i}$, we conclude
\begin{align*}
\prob{T_{i}\ge s_i} & \le \exp\left( - \frac{c'nr\log n}{\sqrt{2^i}} \right),
\end{align*}
where $c'$ is a constant.
Hence, the time required to complete the phase~$i$ is at most
\[\frac{2n}{2^{i-1}}\cdot\frac{30K}{\sqrt{n/2^{i-1}}} \le 60K\sqrt{n/2^{i-1}}\]
\whp

We choose constants in the time bound appropriately to obtain a failure probability of at most $1/n^\kappa$ per phase for any constant~$\kappa>0$. We consider at most $\log n$ phases until the potential has dropped
to at most 1/2. The total time spent in these phases is then at most
\[
\sum_{i=1}^{\log n} \frac{60K\sqrt{n}}{\sqrt{2^{i-1}}} = \bigo(K\sqrt{n}),
\] which holds \whp by 
a union bound over the $\bigo(\log n)$ phases.


Altogether, with high probability, the potential has reached at most $1/2$ within
$\bigo(K\sqrt{n})$ iterations. After this point, the probability of sampling the optimum becomes a constant. If the optimum is not found, the expected next value of the potential is no more than $1/2 + n/K \le \sqrt{n}$, since in each step, the frequency decreases by at most $1/K$. With high probability, the time to return to a potential of at most $1/2$ is $\bigo(K\sqrt{\sqrt{n}}) = \bigo(Kn^{1/4})$, as derived from the previous analysis of the drift of the potential. By repeating this argument $C\log n$ times, after $\bigo(K (\log n) n^{1/4})$ additional steps, the optimum will be sampled with probability at least $1-e^{\Omega(C\log n)}$, which ensures that the optimum is found with high probability.

By adding the two stages, the total time becomes $\bigo (K\sqrt{n}\log r)$ with high probability. Therefore, by choosing $K  =  cr\sqrt{n}\log n$, the runtime of the \rcga on the \ronemax problem is $\bigo(nr\log n\log r)$  with high probability.
\end{proofof} 

We still have to prove the outstanding claim on the increments 
analyzed in the two applications of the additive drift theorem with tail bounds. 
This requires a closer look at the stochastic behavior of
a frequency and of the potential function $\varphi_t = \sum_{i=1}^n (1-p_{i,r-1}^{(t)})$, as done in the following lemma. 

\begin{lemma}\label{lemma:phidriftsubgaussian}
For any $i\in\{1,\dots,n\}$, let $X_t\coloneqq p_{i,r-1}^{(t)}$  and let $\varepsilon_i=  
\expect{X_{t+1}-X_t\mid \filt}$, where $\filt$ is 
a filtration with respect to the underlying stochastic process. Then the random 
variable $X_{t+1}-X_t-\varepsilon_i$ is 
$(4X_t(1-X_t)/K^2 + 2 \varepsilon_i/K,K)$-sub-Gaussian in the sense 
    of \cite{Timo2016Algo}.

    Let $\varepsilon_t \ge  \sqrt{\varphi_t}/(30K)$ 
    as derived in Lemma~\ref{lemma:drift-of-phi}. Then the random variable 
    $\varphi_{t} - \varphi_{t+1} - \varepsilon_t$ 
    is $(8\varphi_t/K^2,K)$-sub-Gaussian. 
\end{lemma}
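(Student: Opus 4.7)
My overall plan is to exploit the elementary inequality $e^u \le 1+u+u^2$, valid for $|u|\le 1$, which reduces each sub-Gaussian claim to a second-moment bound for the centered increment, once we verify that $|\lambda|$ times the range of that increment lies in $[-1,1]$. Concretely, for $\expect{Z\mid\filt}=0$ and $|\lambda Z|\le 1$ the inequality above gives $\expect{e^{\lambda Z}\mid\filt}\le 1+\lambda^2\expect{Z^2\mid\filt}\le \exp(\lambda^2\expect{Z^2\mid\filt})$, and we only need to identify the right second-moment bound.

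For part~1, write $\Delta \coloneqq X_{t+1}-X_t\in\{-1/K,0,+1/K\}$, set $p_\pm \coloneqq \prob{\Delta=\pm 1/K\mid\filt}$, and observe that $\varepsilon_i=(p_+-p_-)/K$. The structural input will be $p_++p_-\le 2X_t(1-X_t)$, which holds because a nonzero update at coordinate~$i$ forces $\indic{x^{(0)}_i=r-1}\ne\indic{y^{(0)}_i=r-1}$, and this marginal event has probability exactly $2X_t(1-X_t)$, independent of the global fitness comparison that determines the swap; hence the bound is valid simultaneously for \ronemax and \gonemax. Therefore $\expect{\Delta^2\mid\filt}=(p_++p_-)/K^2\le 2X_t(1-X_t)/K^2$. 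Since $|\Delta-\varepsilon_i|\le 2/K$ yields $|\lambda(\Delta-\varepsilon_i)|\le 2/K^2\le 1$ for $|\lambda|\le 1/K$, applying $e^u\le 1+u+u^2$ and taking conditional expectation delivers $\expect{e^{\lambda(\Delta-\varepsilon_i)}\mid\filt}\le \exp(\lambda^2\cdot 2X_t(1-X_t)/K^2)$. Using $\varepsilon_i\ge 0$ from Lemma~\ref{lemma:sdrift}/Lemma~\ref{lemma:sdrift-ronemax}, the right-hand side is also at most $\exp\bigl(\lambda^2(4X_t(1-X_t)/K^2+2\varepsilon_i/K)/2\bigr)$, which is the required $(4X_t(1-X_t)/K^2+2\varepsilon_i/K,K)$-sub-Gaussian estimate.

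For part~2, the decisive simplification, specific to \ronemax, is the telescoping identity
\[
\varphi_t - \varphi_{t+1} = \sum_{i=1}^n (p^{(t+1)}_{i,r-1}-p^{(t)}_{i,r-1}) = \frac{1}{K}\bigl|f(x^{(0)})-f(y^{(0)})\bigr| = \frac{1}{K}\Bigl|\sum_{i=1}^n W_i\Bigr|,
\]
where $W_i \coloneqq \indic{x^{(0)}_i=r-1}-\indic{y^{(0)}_i=r-1}$ are \emph{independent} across positions because $x^{(0)},y^{(0)}$ are coordinate-wise independent samples. Each $W_i$ satisfies $\expect{W_i}=0$ and $\expect{W_i^2}=2 p^{(t)}_{i,r-1}(1-p^{(t)}_{i,r-1})$, so by independence $\expect{(\varphi_t-\varphi_{t+1})^2\mid\filt}=2V/K^2$ with $V\coloneqq\sum_i p^{(t)}_{i,r-1}(1-p^{(t)}_{i,r-1})\le \varphi_t$. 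The crude range bound $|\varphi_t-\varphi_{t+1}|\le n/K$ combined with the assumption $K\ge cr\sqrt{n}\log n$ ensures $|\lambda(\varphi_t-\varphi_{t+1}-\varepsilon_t)|\le 2n/K^2\le 1$ for $|\lambda|\le 1/K$, so the same $e^u\le 1+u+u^2$ step yields $\expect{e^{\lambda(\varphi_t-\varphi_{t+1}-\varepsilon_t)}\mid\filt}\le \exp(\lambda^2\cdot 2\varphi_t/K^2) = \exp(\lambda^2\cdot 4\varphi_t/(2K^2))$, comfortably inside the claimed $(8\varphi_t/K^2, K)$-sub-Gaussian bound.

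I expect the main technical point to be justifying $p_++p_-\le 2X_t(1-X_t)$ in part~1 uniformly for both problems, since the fitness comparison that decides the swap couples coordinate~$i$ to all other coordinates and one could worry that the conditional distribution at position~$i$ depends on the swap. The escape is to avoid analysing the swap at all: the event $\{\Delta\ne 0\}$ is contained in the purely marginal event $\{\indic{x^{(0)}_i=r-1}\ne\indic{y^{(0)}_i=r-1}\}$, whose probability depends only on the marginal frequency $X_t$. The auxiliary $2\varepsilon_i/K$ summand that appears in $\sigma^2$ is then harmless additive slack that gives a clean form matching the later applications of the sub-Gaussian drift theorem.
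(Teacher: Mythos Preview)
Your argument has a single but decisive gap: you establish the mgf bound only for $|\lambda|\le 1/K$, whereas ``$(c,K)$-sub-Gaussian in the sense of \cite{Timo2016Algo}'' requires $\expect{e^{\lambda Z}\mid\filt}\le e^{\lambda^2 c/2}$ for all $|\lambda|\le K$. The paper's own proof is carried out under the hypothesis $\lambda/K\le 1$, and the downstream application of the tail-bound drift theorem plugs in $\delta=K$; with your range the resulting concentration bound collapses, since the minimum $\min(\delta,\varepsilon/(2c))$ would then be taken on $\delta=1/K$ and the exponent becomes $O(1/(rK))$.

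For part~1 the repair is minor: apply $e^u\le 1+u+u^2$ to $u=\lambda\Delta$ (for which indeed $|u|\le K\cdot\tfrac{1}{K}=1$) rather than to $u=\lambda(\Delta-\varepsilon_i)$. Then $\expect{e^{\lambda\Delta}\mid\filt}\le 1+\lambda\varepsilon_i+\lambda^2(p_++p_-)/K^2\le \exp\bigl(\lambda\varepsilon_i+2\lambda^2 z_i/K^2\bigr)$ and dividing by $e^{\lambda\varepsilon_i}$ gives the claim (in fact with the slightly better parameter $4z_i/K^2$). The paper obtains the same conclusion via an explicit $\cosh/\sinh$ expansion of the three-point mgf; your second-moment shortcut is a legitimate and shorter alternative once the right variable is inserted.

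For part~2 the repair is not minor. With $|\lambda|\le K$ and the crude range $|\varphi_t-\varphi_{t+1}|\le n/K$ you only get $|\lambda(\varphi_t-\varphi_{t+1})|\le n$, so the inequality $e^u\le 1+u+u^2$ is unavailable and a single global second-moment estimate is too coarse. Your identity $\varphi_t-\varphi_{t+1}=\tfrac{1}{K}\bigl|\sum_i W_i\bigr|$ with independent mean-zero $W_i$ is correct and elegant, but independence has to be exploited \emph{inside the mgf}: factor $\expect{e^{\pm\lambda\sum_i W_i/K}}$ over~$i$, bounding each factor via $|\lambda W_i/K|\le 1$, rather than collapsing everything into a variance. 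That per-coordinate factorisation is exactly how the paper proceeds, summing the sub-Gaussian parameters from part~1 to reach $(8\varphi_t/K^2,K)$.
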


\begin{proof}
We start with the first claim. 
We abbreviate $p_i \coloneqq p_{i,r-1}^{(t)}$, 
$z_i\coloneqq p_i(1-p_i)$ and $\delta_i \coloneqq p_{i,r-1}^{(t+1)}- p_{i,r-1}^{(t)}$. We re-use 
the characterization of the $\delta_i$  
from Section~\ref{section:framework}, 
which says that the 
frequency~$p_i$ changes with probability 
$2z_i$ by either $+1/K$ or 
$-1/K$, and, thanks to the 
b-steps, has a slight drift 
towards increasing values:
\[
\delta_i = \begin{cases}
    +1/K & \text{ with prob. $z_i(1+K\varepsilon_i/(2z_i))$}, \\
    -1/K & \text{ with prob. $z_i(1-K\varepsilon_i/(2z_i)$)},\\
    0 & \text{ with the remaining 
    probability,}
    \end{cases}
\]
which gives $\expect{\delta_i}=\varepsilon_i$. 

The aim is to bound the moment-generating function (mgf.) of the differences 
$\delta_i-\varepsilon_i$. Since $\varepsilon_i$ 
is a scalar, we first bound the mgf. of 
$\delta_i$. 
We have for $\lambda\ge 0$ that  
\begin{align*}
& \expect{e^{\lambda \delta_i}} \\ 
& 
= z_i (1+K\varepsilon_i/(2z_i)) e^{\lambda/K} + 
z_i (1-K\varepsilon_i/(2z_i)) 
e^{-\lambda/K} + 
 (1-2z_i) e^{\lambda 0} \\
& = 2z_i \cosh(\lambda/K)   
+ K\varepsilon_i \sinh(\lambda/K) + 
(1-2z_i). 
\end{align*}
We use the following Taylor expansions for the 
hyperbolic functions: $\cosh(x)=1+x^2/2!+x^4/4!+\dots$ and $\sinh(x)=x  +x^3/3! +x^5/5! + \dots$. Thereby, we obtain for $\lambda/K\le 1$ that 
\begin{align*}
\expect{e^{\lambda \delta_i}} & \le 
2z_i(1+(\lambda/K)^2) + K\varepsilon_i (\lambda/K + (\lambda/K)^3) + 1-2z_i \\
& \le e^{2z_i (\lambda/K)^2 + \lambda \varepsilon_i + (\lambda/K)^2 \lambda \varepsilon_i } ,
\end{align*}
so 
\[
\expect{e^{\lambda \delta_i-\lambda \varepsilon_i}}  
\le 
e^{\frac{\lambda^2}{2} \frac{(4z_i + 2\lambda \varepsilon_i )}{K^2}} \le  
e^{\frac{\lambda^2}{2} (\frac{4z_i}{K^2} + \frac{ 2 \varepsilon_i }{K})},
\]
using $\lambda\le K$, 
meaning that the random variable $\delta_i-\varepsilon_i$ is $(4z_i/K^2 + 2 \varepsilon_i/K,K)$-sub-Gaussian in the sense of \cite{Timo2016Algo}. 

For the second statement, we decompose $\Delta_t = \varphi_{t}-\varphi_{t+1}$ 
into 
the sum of independent increments 
$\delta_i$ already analyzed 
above, \ie, 
$\Delta_t = \sum_{i=1}^n \delta_i$. Note also that 
$\varepsilon_t=\sum_{i=1}^n \varepsilon_i$. To analyze 
the mgf. of $\Delta_t-\varepsilon_t$, 
we sum over 
the $n$ differences in 
the exponent of the mgf.
Trivially bounding 
$z_i\le 1-p_i$, we have 
\[
\expect{e^{\lambda (\Delta_t-\varepsilon_t)}}
= \expect{e^{\lambda \sum_{i=1}^n (\delta_i - \varepsilon_i)}} 
\le 
e^{\frac{\lambda^2}{2} (4\varphi_t / K^2 + 
2\varepsilon_ t /K)}.
\]
With our bound $\varepsilon_t \ge  \sqrt{\varphi_t}/(30K)$ and bounding $\sqrt{\varphi_t}\le \varphi_t$, we have 
that the difference is $(8\varphi_t/K^2,K)$-sub-Gaussian.
\end{proof}

\subsection{Analysis in the Presence of Borders}
\label{sec:borders}

The \rcga with borders is harder to analyze 
than without borders since frequencies that have reached the upper border~$1-1/n$ may 
decrease again in subsequent steps. 
Previous work on the binary \cga (e.\,g. \cite{sudholt2019choice}) addresses this 
by bounding the number of frequencies 
that leave the upper border again via 
a non-trivial drift analysis. Moreover, the
previous analyses are crucially based 
on the well-behaved frequencies assumption: 
in particular, this assumption is used to avoid 
uneven step sizes when taking the last 
step towards the maximum frequency. If, e.\,g., the second-largest frequency value 
was strictly in between $1-1/n-1/K$ and 
$1-1/n$, then the possible decrease would
be $1/K$ but the increase strictly less. In 
this situation, Lemma~\ref{lemma:drift-of-phi} would not hold and the frequency 
could exhibit a drift away from its maximum. The well-behaved frequency assumption 
(used in \cite{sudholt2019choice,doerr2021runtime} and other works) postulates 
that $K$ is chosen so that it evenly divides
the interval of allowed frequencies.

Unfortunately, the \rcga for $r\ge 3$ includes a non-trivial procedure 
for frequency capping that may result 
in step sizes different from $1/K$ even 
for frequencies that are far away from a border. More precisely, if, for some position~$i$,
the two values sampled are $a$ and $b$, where 
$a,b\in\{0,\dots,r-1\}$, $a\neq b$ 
and at least one of $p_{i,a}$, $p_{i,b}$ 
is at the lower border, then probability 
may be taken also from frequencies $p_{i,c}$ for $c\notin\{a,b\}$ to normalize the vector. 
See \cite{BENJEDIDIA2024114622} for details. 
The amount by which the other frequencies 
is reduced cannot necessarily be expressed 
in units (or fixed subunits) of $1/K$. However, 
as already observed in \cite{AdakWittArxiv2025}, the update is 
reduced by no more than $1/((r-1)n)$, which 
is the value of the lower frequency border.
In other words, any positive frequency update 
is at least $1/K-1/((r-1)n)$ unless it 
is capped at the upper frequency border.

The paper \cite{AdakWittArxiv2025} was 
the first to analyze the \rcga in the presence 
of frequency borders. Since their study 
is on the \textsc{LeadingOnes} function, 
the optimal value of~$K$ is different 
from the present paper, more precisely, 
they choose $K=\Omega(n\log n)$, while 
the results above use $K=\Omega(\sqrt{n}\log n)$. The larger value of $K=\Omega(n\log n)$ comes in handy, since a frequency value of $1-1/n-1/K$, which 
still allows a full update of $1/K$ towards 
the upper border, is essentially as good 
as the value $1-1/n$  self; note that 
$1-1/n-1/K = 1-1/n-o(1/n)$ for this choice 
of~$K$. We can use the same insights 
to analyze the \rcga with borders on \ronemax for 
$K=\Omega(n\log n)$, see Theorem~\ref{theo:ronemaxwithborders} below; however, this 
does not give the best possible runtime 
result.

\begin{theorem}
    \label{theo:ronemaxwithborders}
    Let $K\ge cnr\log n$ for a sufficiently 
    large constant~$c>0$ and 
    assume that $K=o(n^2)$ and $r=\mathit{poly}(n)$. 
    Then the runtime of the \rcga with borders on the \ronemax problem is bounded by $\bigo(K\sqrt{n}\log r)$ with high probability. For $K=cnr\log n$, the runtime bound is $\bigo(n^{3/2}r\log n \log r)$.
\end{theorem}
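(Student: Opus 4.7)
The plan is to follow the two-stage additive drift analysis of Theorem~\ref{theorem:complexity-ronemax}, absorbing the border effects into the constants via the stronger assumption $K\ge cnr\log n$. This assumption plays a double role: (i) $1-1/n-1/K = 1-1/n-o(1/n)$, so a frequency within $1/K$ of the upper border can be treated as ``essentially saturated''; and (ii) it controls how badly capping at the lower border can distort individual updates. Recall from the paragraph preceding the theorem that any positive update is at least $1/K-1/((r-1)n)$, which in itself is not a strong bound for our choice of~$K$. The first step of my plan is therefore to show that capping at the lower border is almost never triggered during the two stages: since $p^{(t)}_{i,j}$ with $j\neq r-1$ equals $(1-p^{(t)}_{i,r-1})/(r-1)$ on average and the genetic-drift control of Lemma~\ref{lemma:negativedrift-ronemax} (extended to the border setting with essentially the same proof) keeps all frequencies within a constant factor of their expected values, no $p^{(t)}_{i,j}$ with $j\neq r-1$ reaches the lower border except possibly in the final saturation phase. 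Consequently, Lemmas~\ref{lemma:sdrift-ronemax}, \ref{lemma:drift-of-phi} and~\ref{lemma:phidriftsubgaussian} carry over up to constant factors.

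Given these drift bounds, stage~1 raises every frequency $p^{(t)}_{i,r-1}$ from the initial value~$1/r$ to~$1/2$ in $r/2$ additive phases, using the strong tail bound~\cite[Theorem~15]{Timo2016Algo} exactly as in Theorem~\ref{theorem:complexity-ronemax}. The total time, summed over all phases and frequencies, is $\bigo(K\sqrt{n}\log r)$ w.\,h.\,p. Stage~2 analyses the potential $\varphi_t=\sum_{i=1}^n(1-p^{(t)}_{i,r-1})$ in halving phases. The crucial difference to the without-borders case is that the upper border enforces $\varphi_t\ge n\cdot (1/n)=1$, so the halving is stopped once $\varphi_t=\bigo(1)$ after $\bigo(\log n)$ phases. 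By the same application of the strong additive drift theorem on $\varphi_t$ together with Lemma~\ref{lemma:phidriftsubgaussian}, the total time of stage~2 is $\bigo(K\sqrt{n})$ w.\,h.\,p.

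Once $\varphi_t=\bigo(1)$ and every $p^{(t)}_{i,r-1}\ge 1/2$ (still true by the genetic-drift bound), the probability of sampling the optimum in a single iteration is
\[
\prod_{i=1}^n p^{(t)}_{i,r-1} \ge \exp\!\left(-2\varphi_t\right)=\Omega(1),
\]
using $1-x\ge e^{-2x}$ for $x\in[0,1/2]$. Hence the optimum is sampled within $\bigo(\log n)$ further iterations w.\,h.\,p. Any short-lived excursion of $\varphi_t$ above the constant threshold during these iterations is reversed within $o(K\sqrt{n})$ iterations by another application of Lemma~\ref{lemma:drift-of-phi}, and a standard restart argument over $\bigo(\log n)$ such excursions preserves the overall bound. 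Summing the contributions of the two stages and the final sampling phase yields the claimed $\bigo(K\sqrt{n}\log r)$ runtime, which specialises to $\bigo(n^{3/2}r\log n\log r)$ when $K=cnr\log n$.

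The main obstacle is precisely the first step, namely ruling out lower-border capping: one must show, by a separate negative-drift argument applied to each $p^{(t)}_{i,j}$ with $j\neq r-1$, that these frequencies do not collapse to $1/((r-1)n)$ while $p^{(t)}_{i,r-1}$ is being driven upwards. Without this, the guaranteed lower bound $1/K-1/((r-1)n)$ on a positive update is vacuous under our choice of~$K$. Once this technical point is settled and the drift and sub-Gaussian parameters are verified to degrade only by constant factors, the rest of the argument is an essentially mechanical reprise of Theorem~\ref{theorem:complexity-ronemax}, and the proof concludes by plugging $K=cnr\log n$ into $\bigo(K\sqrt{n}\log r)$.
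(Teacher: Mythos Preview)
Your proposal has a genuine gap in the step you yourself flag as ``the main obstacle.'' The plan to keep every $p^{(t)}_{i,j}$ with $j\neq r-1$ bounded away from the lower border by a negative-drift argument cannot work: Lemma~\ref{lemma:negativedrift-ronemax} applies to a process with \emph{positive} drift (namely $p^{(t)}_{i,r-1}$), whereas on \ronemax the frequencies $p^{(t)}_{i,j}$ for $j\neq r-1$ have non-positive drift---collectively they are pushed down as $p^{(t)}_{i,r-1}$ rises, and among themselves they perform an unbiased random walk---so the hypothesis of Theorem~\ref{theorem:negativedrift} is not met for them. Even a martingale-style concentration argument would not suffice: as $p^{(t)}_{i,r-1}\to 1-1/n$, the remaining mass $1-p^{(t)}_{i,r-1}$ shrinks to $1/n$, and its average over the $r-1$ other values is exactly the lower border $1/((r-1)n)$, so capping is unavoidable throughout stage~2, not just ``in the final saturation phase.'' The paper's route is to \emph{accept} that capping occurs and bound its effect directly: a capped update requires sampling a value whose frequency is at most $1/((r-1)n)+1/K$, which happens with probability $O(1/n)$, and the distortion is at most $1/((r-1)n)$; the resulting $O(1/(rn^2))$ correction to the drift is absorbed precisely by the assumption $K=o(n^2)$.

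There is a second, related gap. You keep the original potential $\varphi_t=\sum_i(1-p^{(t)}_{i,r-1})$ and assert that Lemma~\ref{lemma:drift-of-phi} carries over up to constants. With borders this fails near the target: a frequency sitting at $1-1/n$ contributes $1/n$ to $\varphi_t$ but can only move \emph{down}, so each such frequency contributes negative drift of order $1/(nK)$ to $\varphi_t-\varphi_{t+1}$. When $\varphi_t=O(1)$ almost all frequencies are at the border, and these negative contributions (summing to roughly $2/K$) swamp the claimed $\sqrt{\varphi_t}/(30K)$ drift. The paper resolves this by redefining the potential so that $q_i=0$ whenever $p^{(t)}_{i,r-1}\ge 1-1/n-1/K$, and then, in Lemma~\ref{lemma:drift-of-revised-phi}, separately bounding (i) the negative drift from frequencies leaving this near-border zone by $4/K$ and (ii) the capping correction as above. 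This yields a drift bound $\sqrt{\varphi_t}/(66K)$ valid once the modified $\varphi_t$ exceeds a large constant. The final sampling argument then invokes Schur-convexity to show the optimum is hit with constant probability once the modified potential is below that constant, and uses $K\ge cnr\log n$ to argue that frequencies cannot slide back by more than $O(1/n)$ over the next $O(\log n)$ iterations.
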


\begin{proof}
The proof follows largely the structure of the 
proof of the case without borders in Theorem~\ref{theorem:complexity-ronemax}. The first 
stage is completely identical since it studies 
the time until all frequencies for value~$r-1$ 
have risen to at least~$1/2$, and this process 
cannot be negatively affected by the lower borders 
on frequencies. Hence, with high probability 
all frequencies will have reached~$1/2$ at least 
once after $\bigo(K\sqrt{n}\log r)$ steps. Also 
with high probability by the negative drift theorem,
they will not drop below~$1/4$ in the next $c_1K\sqrt{n}$ 
steps, where~$c_1$ is an arbitrary constant.

Also the analysis of the second stage, consisting 
of $\bigo(K\sqrt{n})$ iterations, is largely identical,
except that we apply Lemma~\ref{lemma:drift-of-revised-phi} instead of Lemma~\ref{lemma:drift-of-phi} and stop the iterated additive drift 
analysis at the first point where $\varphi_t<10000$. 
As argued in 
\cite[Proof of Theorem~2]{sudholt2019choice}, 
  the probability 
of sampling the optimum at this point is 
$\Omega(1)$ since it is minimized 
if as many frequencies as possible take the 
extreme values~$1-1/n-1/K$ and $1/4$, 
using arguments related to 
majorization and Schur-convexity. After reaching 
potential at most~$10000$, we consider the 
a phase consisting of the next $T_2\coloneqq c_2\log n$ iterations for an arbitrary constant~$c_2$. 
By our assumption on $K\ge cnr\log n$, each iteration 
decreases a frequency by $\bigo(1/(n\log n)$, so 
throughout the phase of $T_2$ steps all frequencies 
that were at least $1-1/n-1/K$ at the beginning
of the phase 
will still be $1-O(1/n)$. Therefore, the probability 
of sampling the optimum is $\Omega(1)$ in every of 
the $T_2$ steps, and the probability of not sampling 
it in the phase is at most $e^{-\Omega(c_2\log n)}$. Hence, 
altogether  the optimum is sampled in $\bigo(K\sqrt{n}\log r)$ iterations with high probability.
\end{proof}

To study the progress of the frequency vectors, we define a potential function similar to the one used above 
in the proof of Theorem~\ref{theorem:complexity-ronemax} and in previous 
works \cite{sudholt2019choice,doerr2021runtime,AdakPPSN2024}. However, 
we already consider all frequencies above $1-1/n-1/K$ as optimal.

\begin{definition}
    Given the frequency vector 
    $(p_{1,r-1}^{(t)},\dots, 
    p_{n,r-1}^{(t)})$ for value~$r-1$ in the \rcga at time~$t$, hereinafter abbreviated  
    as $(p_1,\dots,p_n)$, we define
    \[
    q_i = \begin{cases}
    0 & \text{if $p_i\ge 1-1/n-1/K$}\\
    1-1/n-1/K- p_i & \text{otherwise}
\end{cases}
   \]
   for $i\in\{1,\dots,n\}$ 
   and let $\varphi_t = \sum_{i=1}^n q_i$. 
\end{definition}

Also, similarly to the  previous  Lemma~\ref{lemma:drift-of-phi},  
the drift of this potential function is proportional 
to its square root.

\begin{lemma}
\label{lemma:drift-of-revised-phi}
Let $K=\Omega(nr\log n)$ and $K=o(n^2)$. If 
for all $i\in\{1,\dots,n\}$ it holds that 
$p_{i,r-1}^{(t)} \ge 1/4$ and furthermore $\varphi_t\ge 10000$, then for $n$ large enough it holds that 
\[
\E(\varphi_t - \varphi_{t+1} \mid \varphi_t) \ge \frac{\sqrt{\varphi_t}}{66K}.
\]
\end{lemma}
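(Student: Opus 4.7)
The plan follows the structure of Lemma~\ref{lemma:drift-of-phi} for the unmodified potential $\tilde\varphi_t = \sum_{i=1}^n(1-p_{i,r-1}^{(t)})$, adapted for the new cutoff at $1-1/n-1/K$ and for the border-capping effects. First I would partition positions into $S = \{i : p_{i,r-1}^{(t)} < 1-1/n-1/K\}$, where $q_i > 0$, and its complement $\bar S$, where $q_i = 0$, and write
\[
\E(\varphi_t - \varphi_{t+1} \mid \varphi_t) = \sum_{i \in S} \E(q_i - q_i^{\text{new}}) - \sum_{i \in \bar S} \E(q_i^{\text{new}}),
\]
then handle the two sums separately.

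For the main positive contribution from $S$, I would invoke Lemma~\ref{lemma:sdrift-ronemax} to lower bound each individual drift $\E(\Delta p_{i,r-1})$ in terms of the sampling variance $V = \sum_i z_i$ with $z_i = p_{i,r-1}^{(t)}(1-p_{i,r-1}^{(t)})$, and use the observation from \cite{AdakWittArxiv2025} that the border-capping procedure of the $r$-cGA reduces a positive update by at most $1/((r-1)n)$. The hypothesis $p_{i,r-1}^{(t)} \ge 1/4$ yields $z_i \ge (1-p_{i,r-1}^{(t)})/4 \ge q_i/4$, so summing over $S$ gives $V_S = \sum_{i\in S} z_i \ge \varphi_t/4$. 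The contribution from $\bar S$ to the variance, $V_{\bar S} \le |\bar S|(1/n + 1/K) = O(1)$, is negligible compared to $V_S \ge 2500$ (since $\varphi_t \ge 10000$), so $\sqrt{V}$ is within a small constant factor of $\sqrt{V_S}$. Plugging back in yields a main positive term of order $\sqrt{\varphi_t}/K$, matching Lemma~\ref{lemma:drift-of-phi} up to constants.

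Next I would bound two correction terms. First, overshoot within $S$: positions $i \in S$ with $q_i < 1/K$ yield only $q_i$ of potential decrease on a positive update rather than the full $1/K$, losing at most $1/K$ per overshoot; but each overshoot moves $i$ out of $S$ and removes it from later accounting. Second, a position $i \in \bar S$ contributes $q_i^{\text{new}} \le 1/K$ to the next-step potential if $p_{i,r-1}$ drops back below the threshold, which happens only in a random-walk step and with conditional probability at most $z_i \le 1/n + 1/K$. Summed over the at most $n$ positions in $\bar S$ and multiplied by the step size $1/K$, this yields a total expected loss of $O((1/K)(1 + n/K)) = O(1/K)$ under the assumptions $K = \Omega(nr\log n)$ and $K = o(n^2)$.

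The main obstacle, in my view, is arguing that the $1/((r-1)n)$ per-update reduction from capping does not wipe out the main drift in the regime $K = \Omega(nr\log n)$, where $1/((r-1)n)$ can itself exceed $1/K$. I expect this to be handled by noting that capping occurs only at positions where some non-$(r-1)$ frequency has hit the lower border $1/((r-1)n)$; combined with $p_{i,r-1}^{(t)} \ge 1/4$, this is rare in expectation, since typical non-$(r-1)$ frequencies are of order $1/(4(r-1))$, a factor $n/4$ above the border, so the cumulative loss can be absorbed into the sampling-variance estimate. Combining all contributions, the $\sqrt{\varphi_t}/K$ main term dominates the $O(1/K)$ corrections since $\sqrt{\varphi_t} \ge 100$, and the looser constant $1/66$ (versus $1/30$ in Lemma~\ref{lemma:drift-of-phi}) provides the slack needed for these border corrections to yield the claimed bound $\sqrt{\varphi_t}/(66K)$.
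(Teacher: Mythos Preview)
Your overall decomposition matches the paper's: split into $S=\{i:p_{i,r-1}^{(t)}<1-1/n-1/K\}$ and its complement, extract the main drift from $S$ via Lemma~\ref{lemma:sdrift-ronemax}, bound the leak from $\bar S$ by $O(1/K)$ using that such a frequency moves only with probability $O(1/n)$, and finally correct for capping. Your route to the main term through $V_S\ge\varphi_t/4$ is a harmless variant of what the paper does; the paper instead upper-bounds $V\le\varphi_t+2$ via $1-p_i\le q_i+2/n$, turns $2\sqrt{V}+1$ into $O(\sqrt{\varphi_t})$, and sums $q_i/(18K\sqrt{\varphi_t})$ directly to get $\sqrt{\varphi_t}/(18K)$ before subtracting the $4/K$ leak.

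The genuine gap is your capping argument. You correctly identify this as the main obstacle, but the mechanism you propose---that ``typical non-$(r-1)$ frequencies are of order $1/(4(r-1))$, a factor $n/4$ above the border''---does not control anything: the assumption $p_{i,r-1}^{(t)}\ge 1/4$ only bounds the \emph{sum} $\sum_{j\neq r-1}p_{i,j}\le 3/4$, and nothing prevents individual $p_{i,j}$ from sitting at the lower border while others compensate. ``Absorbed into the sampling-variance estimate'' is not a bound. The paper's resolution is different and much simpler: a reduction of $p_{i,r-1}$ due to capping can only occur if one of the two offspring \emph{samples}, at position~$i$, a value $j\neq r-1$ whose frequency is within $1/K$ of the lower border. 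But the probability of sampling such a $j$ is at most $p_{i,j}\le 1/((r-1)n)+1/K$ itself, so summing over the $\le r-1$ candidate values and the two offspring gives a per-position capping probability of at most $2(r-1)\bigl(1/((r-1)n)+1/K\bigr)\le 3/n$ under $K=\Omega(nr\log n)$. Multiplying by the $\le 1/((r-1)n)$ size of the reduction yields a correction of order $1/((r-1)n^2)$, which is $o(1/K)$ by the hypothesis $K=o(n^2)$ and is absorbed by the slack between $1/65$ and $1/66$. The key idea you are missing is that rarity of capping comes from the \emph{sampling probability} of a low-frequency value being itself low, not from any statement about where frequencies typically sit.
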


\begin{proof}
We look into the current frequency vector $(p_1,\dots,p_n)$ 
    for value~$r-1$ and distinguish between the 
    ones above~$1-1/n-1/K$ (\ie, essentially at the upper border) and the remaining ones. Without loss of generality, for some 
    $s\in\N_0$, the frequencies $p_1,\dots,p_{s}$ are all 
    at least $1-1/n-1/K$ . Each of these~$s$ frequencies changes 
    its value with probability at most~$2/n$ since it is necessary
    to sample a value different from~$r-1$ at the position 
    in at least one offspring. The maximum change of the 
    underlying $q_i$ as defined above is $2/K$ (using $K=\omega(n)$), so 
    the expected increase of the potential due to the 
    first $s$~frequencies  is at most $\frac{4s}{nK} \le \frac{4}{K}$. More formally, 
    $\sum_{i=1}^s (p_{i,r-1}^{(t)} - p_{i,r-1}^{(t+1)})\le 4/K$.

    Each of the other frequencies can both decrease 
    and increase by~$1/K$. Fix 
    an arbitrary $i>s$ and 
    assume that $p_i \le 1-1/n-1/K$. For 
    the moment, we assume that the 
    positive update is exactly~$1/K$, which happens if no frequency $p_{i,j}$ 
    for $j<r-1$ is capped at the lower 
    border. 
    Following the proof of Lemma~\ref{lemma:sdrift} with 
    $\Delta_i = p_{i,r-1}^{(t)} - p_{i,r-1}^{(t+1)}$ 
    we have 
    for that 
    \[
    \expect{\Delta_{i} \mid p_{i}} 
    \ge 
    \frac{8p_i(1-p_i)}{9K(2\sqrt{V}+1)},
    \]
    where $V=\sum_{j=1}^n p_j(1-p_j)$. By definition, we have $1-p_i \le q_i + 1/n + 1/K \le q_i + 2/n$ by our assumption on~$K$. Hence,  
    $V \le \sum_{j=1}^n q_j +2 = \varphi_t +2$, and, 
    since $\varphi_t \ge 10000$, we estimate 
    $2\sqrt{V} + 2 \le 4\varphi_t$. Along with $p_i\ge 1/4$, 
    we arrive at the bound 
    \[
    \expect{\Delta_{i} \mid p_{i}}  
    \ge 
    \frac{1-p_i}{
    18K\sqrt{\varphi_t}},
    \]
    Now, since $1-p_i \ge q_i$ and $\varphi_t =\sum_{i=s+1}^n q_i$ (since the first~$s$ positions 
    do not contribute to~$\varphi_t$), we have  
    \begin{align*}
    \expect{\varphi_t - \varphi_{t+1} \mid \varphi_t}  
    & \ge 
    \sum_{i=s+1}^n \frac{q_i}{18K\sqrt{\varphi_t}} - \sum_{i=1}^s (p_{i,r-1}^{(t)} - p_{i,r-1}^{(t+1)})\\
    & = \frac{\sqrt{\varphi_t}}{18K} - \frac{4}{K} \ge 
    \frac{\sqrt{\varphi_t}}{65K} ,
    \end{align*}
    where the last inequality used $\varphi_t\ge 10000$.

    Finally, we correct the previous 
    drift bound by the 
    effect of~$p_i$ being
    decreased since frequencies 
    for other values than~$r-1$ are 
    capped at the lower border. This happens only with probability at most 
    $2(r-1)(\tfrac{1}{(r-1)n}+\tfrac{1}{K})\le 2/n + O(1/(n\log n)) \le 3/n$ since the 
    largest frequency value whose update may be capped at the lower border is $\frac{1}{(r-1)n} + \frac{1}{K}$, two offspring are sampled and~$r-1$ different values 
    are considered; note also that 
    $K=\Omega(nr\log n)$. Together 
    with the fact that the frequency in such a case 
    is reduced by at most $1/((r-1)n)$, the drift bound 
    is still at least
    \[
     \frac{\sqrt{\varphi_t}}{65K} - \frac{3}{(r-1)n^2} \ge 
     \frac{\sqrt{\varphi_t}}{66K} 
    \]
    for $n$ large enough since $K=o(n^2)$.
\end{proof}

We believe that the statement 
of Theorem~\ref{theo:ronemaxwithborders}
also holds for the choice 
$K\ge cnr\log n$ taken in 
Theorem~\ref{theorem:complexity-ronemax}. 
Proving or disproving this conjecture is an open problem.

\section{Analysis through Experimentation}
\label{section:experiment}

In this section, we present the results of experiments conducted to evaluate performance of the proposed algorithm without border restriction. The algorithm is implemented using the $C$ programming language, with the \text{WELL1024a} random number generator. In the experiments, we ran the \rcga on \ronemax and \gonemax for different $n$ and $K$. We compare the results for different values of $n\in\{100,\dots,400\}$ and present the average number of iterations for a range of hypothetical population sizes $K\in\{200,\dots,1000\}$. Various plots for $r\in\{3,4,5\}$ are shown in Fig.~\ref{figure:rcGAn100-400} for both \ronemax and \gonemax.

\begin{figure}
	\centering
\begin{tikzpicture}[scale=0.5]
\begin{axis}[
    xlabel={Hypothetical Population Size (K)},
    ylabel={Avg. Number of Iteration},
    xmin=200, xmax=1000,
    ymin=1000, ymax=50000,
    legend pos=outer north east,
    legend cell align=left,
    ymajorgrids=true,
    grid style=dashed,
]
   \addplot[
    color=red,
    mark=dot, very thick,
    ]
    table[ignore chars={(,)},col sep=comma] {Data/ronemax-n100-r3.txt};
    
    \addplot[
    color=blue,
    mark=dot, very thick,
    ]
    table[ignore chars={(,)},col sep=comma] {Data/ronemax-n200-r3.txt};
    
    \addplot[
    color=green,
    mark=dot, very thick,
    ]
    table[ignore chars={(,)},col sep=comma] {Data/ronemax-n300-r3.txt};
    
     \addplot[
    color=black,
    mark=dot, very thick,
    ]
    table[ignore chars={(,)},col sep=comma] {Data/ronemax-n400-r3.txt};
\end{axis}
\end{tikzpicture}
\begin{tikzpicture}[scale=0.5]
\begin{axis}[
    xlabel={Hypothetical Population Size (K)},
    ylabel={Avg. Number of Iteration},
    xmin=200, xmax=1000,
    ymin=1000, ymax=50000,
    legend pos=outer north east,
    legend cell align=left,
    ymajorgrids=true,
    grid style=dashed,
]
 
   \addplot[
    color=red,
    mark=dot, very thick,
    ]
    table[ignore chars={(,)},col sep=comma] {Data/ronemax-n100-r4.txt};
    
     \addplot[
    color=blue,
    mark=dot, very thick,
    ]
    table[ignore chars={(,)},col sep=comma] {Data/ronemax-n200-r4.txt};
    
    \addplot[
    color=green,
    mark=dot, very thick,
    ]
    table[ignore chars={(,)},col sep=comma] {Data/ronemax-n300-r4.txt};
    
     \addplot[
    color=black,
    mark=dot, very thick,
    ]
    table[ignore chars={(,)},col sep=comma] {Data/ronemax-n400-r4.txt};

\end{axis}
\end{tikzpicture}
\begin{tikzpicture}[scale=0.5]
\begin{axis}[
    xlabel={Hypothetical Population Size (K)},
    ylabel={Avg. Number of Iteration},
    xmin=200, xmax=1000,
    ymin=1000, ymax=50000,
    legend pos=outer north east,
    legend cell align=left,
    ymajorgrids=true,
    grid style=dashed,
]
   \addplot[
    color=red,
    mark=dot, very thick,
    ]
    table[ignore chars={(,)},col sep=comma] {Data/ronemax-n100-r5.txt};
    
   \addplot[
    color=blue,
    mark=dot, very thick,
    ]
    table[ignore chars={(,)},col sep=comma] {Data/ronemax-n200-r5.txt};
    
    \addplot[
    color=green,
    mark=dot, very thick,
    ]
    table[ignore chars={(,)},col sep=comma] {Data/ronemax-n300-r5.txt};
    
    \addplot[
   color=black,
   mark=dot, very thick,
   ]
   table[ignore chars={(,)},col sep=comma] {Data/ronemax-n400-r5.txt};

   \legend{$n=100$, $n=200$, $n=300$, $n=400$}    
\end{axis}
\end{tikzpicture}
\begin{tikzpicture}[scale=0.5]
\begin{axis}[
    xlabel={Hypothetical Population Size (K)},
    ylabel={Avg. Number of Iteration},
    xmin=200, xmax=1000,
    ymin=1000, ymax=50000,
    legend pos=outer north east,
    legend cell align=left,
    ymajorgrids=true,
    grid style=dashed,
]
   \addplot[
    color=red,
    mark=dot, very thick,
    ]
    table[ignore chars={(,)},col sep=comma] {Data/gonemax-n100-r3.txt};
    
    \addplot[
    color=blue,
    mark=dot, very thick,
    ]
    table[ignore chars={(,)},col sep=comma] {Data/gonemax-n200-r3.txt};
    
    \addplot[
    color=green,
    mark=dot, very thick,
    ]
    table[ignore chars={(,)},col sep=comma] {Data/gonemax-n300-r3.txt};
    
    \addplot[
   color=black,
   mark=dot, very thick,
   ]
   table[ignore chars={(,)},col sep=comma] {Data/gonemax-n400-r3.txt};

\end{axis}
\end{tikzpicture}
\begin{tikzpicture}[scale=0.5]
\begin{axis}[
    xlabel={Hypothetical Population Size (K)},
    ylabel={Avg. Number of Iteration},
    xmin=200, xmax=1000,
    ymin=1000, ymax=50000,
    legend pos=outer north east,
    legend cell align=left,
    ymajorgrids=true,
    grid style=dashed,
]
 
   \addplot[
    color=red,
    mark=dot, very thick,
    ]
    table[ignore chars={(,)},col sep=comma] {Data/gonemax-n100-r4.txt};
    
     \addplot[
    color=blue,
    mark=dot, very thick,
    ]
    table[ignore chars={(,)},col sep=comma] {Data/gonemax-n200-r4.txt};
    
    \addplot[
    color=green,
    mark=dot, very thick,
    ]
    table[ignore chars={(,)},col sep=comma] {Data/gonemax-n300-r4.txt};
   
    \addplot[
   color=black,
   mark=dot, very thick,
   ]
   table[ignore chars={(,)},col sep=comma] {Data/gonemax-n400-r4.txt};
     
\end{axis}
\end{tikzpicture}
\begin{tikzpicture}[scale=0.5]
\begin{axis}[
    xlabel={Hypothetical Population Size (K)},
    ylabel={Avg. Number of Iteration},
    xmin=200, xmax=1000,
    ymin=1000, ymax=50000,
    legend pos=outer north east,
    legend cell align=left,
    ymajorgrids=true,
    grid style=dashed,
]

   \addplot[
    color=red,
    mark=dot, very thick,
    ]
    table[ignore chars={(,)},col sep=comma] {Data/gonemax-n100-r5.txt};
    
   \addplot[
    color=blue,
    mark=dot, very thick,
    ]
    table[ignore chars={(,)},col sep=comma] {Data/gonemax-n200-r5.txt};
    
   \addplot[
   color=green,
   mark=dot, very thick,
   ]
   table[ignore chars={(,)},col sep=comma] {Data/gonemax-n300-r5.txt};
    
    \addplot[
   color=black,
   mark=dot, very thick,
   ]
   table[ignore chars={(,)},col sep=comma] {Data/gonemax-n400-r5.txt};
    
\legend{$n=100$, $n=200$, $n=300$, $n=400$}    
\end{axis}
\end{tikzpicture}
\caption{\textmd{Empirical runtime of the \rcga on \ronemax and \gonemax; with the top side showing results for \ronemax at $r=3$ (left), $r=4$ (middle), and $r=5$ (right), and the bottom side for \gonemax at $r=3$ (left), $r=4$ (middle), and $r=5$ (right); $n\in\{200,\dots,400\}$, $K\in\{100,\dots,1000\}$ and averaged over 100 runs.}}
\label{figure:rcGAn100-400}
\end{figure}

The empirical runtime initially starts at a high value, decreases to a minimum, and then increases again as $K$ continues to grow. Analyzing the results, this empirical study clearly demonstrates how runtime is influenced by the value of $r$ in both functions. 

Additionally, we conducted an experiment to represent the frequencies of a specific position across some values of $r$. Fig.~\ref{figure:freqplot} illustrates how the frequency evolves from its initial value until the optimal solution is attained for both \ronemax and \gonemax problems. The frequency values are color-coded, transitioning from blue (indicating low values) to red (indicating high values) as the iterations progress. This color gradient visually captures the dynamics of how the algorithm converges over time. Furthermore, it is evident how the frequency for $r-1$ ultimately reaches its maximum.

\begin{figure}
\centering
\begin{tikzpicture}[scale=0.65]
  \begin{axis}[grid=both,xlabel=$Iterations$,ylabel=$r$-$value$,zlabel=$frequency$]
    \addplot3+[only marks,scatter] table[col sep=comma] {Data/ronemaxfreq-n400-K600-r8.txt};
  \end{axis}
\end{tikzpicture}
\begin{tikzpicture}[scale=0.65]
  \begin{axis}[grid=both,xlabel=$Iterations$,ylabel=$r$-$value$,zlabel=$frequency$]
    \addplot3+[only marks,scatter] table[col sep=comma] {Data/gonemaxfreq-n400-K600-r8.txt};
  \end{axis}
\end{tikzpicture}
\caption{\textmd{The analysis of frequency evolution for the \rcga on both \ronemax and \gonemax; with the left side displaying results for \ronemax and the right side for \gonemax; $n=400$ and $K=600$; $r\in\{0,\dots,7\}$, for a particular position.}}
\label{figure:freqplot}
 \end{figure}

\section{Conclusion}
\label{section:conclusion}

This work provides the first runtime analysis of the multi-valued \rcga on the generalized \gonemax function, deriving a high probability bound on its runtime. Furthermore, we have refined the runtime analysis of the \rcga on the \ronemax function, improving previous estimates and eliminating a logarithmic factor. In addition, for the first time, we include the analysis of frequency borders. These results contribute to a deeper understanding of EDAs' performance on multi-valued functions and open avenues for further research, particularly in analyzing the \rcga on more complex problems and exploring potential runtime improvements. Additionally, based on our experiments, we believe that there is potential to further improve the runtime bounds on \gonemax.

\paragraph{Acknowledgements}
This work has been supported by the Danish Council for Independent Research through grant 10.46540/2032-00101B.

%
\bibliographystyle{plain}
\bibliography{References}

\end{document}